\newtheoremstyle{plain}
	  {}
	  {}
	  {\itshape}
	  {}
	  {\bfseries}
	  {}
	  {5pt plus 1pt minus 1pt}
	  {}
\newtheoremstyle{definition}
  	  {}
	  {}
	  {\normalfont}
	  {}
	  {\bfseries}
	  {}
	  {5pt plus 1pt minus 1pt}
	  {}
\theoremstyle{plain}
\newtheorem{lemma}{Lemma}
\newtheorem{proposition}{Proposition}
\newtheorem{property}{Property}
\theoremstyle{definition}
\newtheorem{definition}{Definition}
\newcommand{\refeq}[1]			{(\ref{#1})} 
\newcommand{\reffig}[1]			{Fig. \ref{#1}} 
\newcommand{\refsec}[1]			{Section \ref{#1}}
\newcommand{\refapp}[1]			{Appendix \ref{#1}}
\newcommand{\reftab}[1]			{Table \ref{#1}}
\newcommand{\reflem}[1]			{Lemma \ref{#1}}
\newcommand{\refdef}[1]			{Definition \ref{#1}}
\newcommand{\reffn}[1] 		    {\textsuperscript{\ref{#1}}}
\newcommand{\refpropty}[1] 	    {Property \ref{#1}}
\newcommand{\cdim}		{d} 
\newcommand{\cdegree}	{n} 
\newcommand{\bcurve}	{\mathrm{B}} 
\newcommand{\bpoly}		{b} 
\newcommand{\bdegree}	{\cdegree} 
\newcommand{\bdim} 		{\cdim} 
\newcommand{\bpoint}	{\vect{p}} 
\newcommand{\bpmat}		{\mat{P}} 
\newcommand{\bbasis}	{\vect{b}} 
\newcommand{\bvar}{\sigma^2} 
\newcommand{\bmean}{\mu} 
\newcommand{\bpmean}{\overline{\mu}} 
\newcommand{\bconsdist}{\mathcal{C}}
\newcommand{\ppoint} 	{\vect{p}} 
\newcommand{\qpoint}		{\vect{q}} 
\newcommand{\pmat} 	{\mat{P}} 
\newcommand{\qmat}		{\mat{Q}} 
\newcommand{\Dmat}    {\mat{D}} 
\newcommand{\Hmat}    {\mat{H}} 
\newcommand{\HmatN}   {\mat{H}_{\mathrm{N}}} 
\newcommand{\HmatDN}{\mat{H}_{\mathrm{DN}}} 
\newcommand{\HmatDDN}{\mat{H}_{\overline{\mathrm{DN}}}} 
\newcommand{\HmatDV}{\mat{H}_{\mathrm{DV}}} 
\newcommand{\HmatDDV}{\mat{H}_{\overline{\mathrm{DV}}}} 
\newcommand{\HmatB}{\mat{H}_{\mathrm{B}}} 
\newcommand{\Smat}  {\mat{S}} 
\newcommand{\freespace}{\mathcal{F}}
\newcommand{\obstspace}{\mathcal{O}}
\newcommand{\safecor} {\mathcal{SC}}
\newcommand{\cpoint}  {\vect{c}}
\newcommand{\xpoint}{\vect{x}}
\newcommand{\Xmat} {\mat{X}}
\newcommand{\refpath}{\mathrm{r}}
\newcommand{\R}  	{\mathbb{R}} 
\newcommand{\conv}      {\mathrm{conv}} 
\newcommand{\N}  	{\mathbb{N}} 
\newcommand{\PSDM} 	{S_{+}} 
\let\originalleft\left
\let\originalright\right
\renewcommand{\left}{\mathopen{}\mathclose\bgroup\originalleft}
\renewcommand{\right}{\aftergroup\egroup\originalright}
\newcommand{\plist}[1] 	{\left(#1\right)} 
\newcommand{\blist}[1]	{\left[ #1 \right]} 
\newcommand{\clist}[1]	{\left\{#1\right\}} 
\newcommand{\vect}[1]   {\mathrm{#1}}
\newcommand{\mat}[1]    {\mathbf{#1}}
\newcommand{\tr}[1] {{#1}^{\mathrm{T}}} 
\newcommand{\norm}[1]  {\|#1\|} %
\newcommand{\trace} {\mathrm{tr}} 
\newcommand{\ldf}   {:=} 
\newcommand{\argmin}{\operatornamewithlimits{arg\ min}} 
\newcommand{\argmax}{\operatornamewithlimits{arg\ max}} 
\newcommand{\diff} {\mathrm{d}}
\DeclareRobustCommand\widecheck[1]{{\mathpalette\@widecheck{#1}}}
\def\@widecheck#1#2{%
    \setbox\z@\hbox{\m@th$#1#2$}%
    \setbox\tw@\hbox{\m@th$#1%
       \widehat{%
          \vrule\@width\z@\@height\ht\z@
          \vrule\@height\z@\@width\wd\z@}$}%
    \dp\tw@-\ht\z@
    \@tempdima\ht\z@ \advance\@tempdima2\ht\tw@ \divide\@tempdima\thr@@
    \setbox\tw@\hbox{%
       \raise\@tempdima\hbox{\scalebox{1}[-1]{\lower\@tempdima\box
\tw@}}}%
    {\ooalign{\box\tw@ \cr \box\z@}}}
\title{\LARGE \bf
\mbox{Graph-Theoretic B\'ezier Curve Optimization over Safe Corridors} \\ for Safe and Smooth Motion Planning \\[2mm] \large{(Technical Report)}
}
\author{Soufyan Zayou and \"{O}m\"{u}r Arslan
\thanks{The authors are with the Department of Mechanical Engineering, Eindhoven University of Technology, P.O. Box 513, 5600 MB Eindhoven, The Netherlands. The second author is also affiliated with the Eindhoven AI Systems Institute. Emails: s.zayou@student.tue.nl,  o.arslan@tue.nl}%
}
\begin{document}

\maketitle
\thispagestyle{empty}
\pagestyle{empty}

\begin{abstract}
As a parametric motion representation, B\'ezier curves have significant applications in polynomial trajectory optimization for safe and smooth motion planning of various robotic systems, including flying drones, autonomous vehicles, and robotic manipulators.
An essential component of B\'ezier curve optimization is the optimization objective, as it significantly influences the resulting robot motion.
Standard physical optimization objectives, such as minimizing total velocity, acceleration, jerk, and snap, are known to yield quadratic optimization of B\'ezier curve control points.
In this paper, we present a unifying graph-theoretic perspective for defining and understanding B\'ezier curve optimization objectives using a \emph{consensus distance} of B\'ezier control points derived based on their interaction graph Laplacian. 
In addition to demonstrating how standard physical optimization objectives define a consensus distance between B\'ezier control points, we also introduce geometric and statistical optimization objectives as alternative consensus distances, constructed using finite differencing and differential variance. 
To compare these optimization objectives,  we apply B\'ezier curve optimization over convex polygonal safe corridors that are automatically constructed around a maximal-clearance minimal-length reference path. 
We provide an explicit analytical formulation for quadratic optimization of B\'ezier curves using B\'ezier matrix operations.
We conclude that the norm and variance of the finite differences of B\'ezier control points lead to simpler and more intuitive interaction graphs and optimization objectives compared to B\'ezier derivative norms, despite having similar robot motion profiles.
\end{abstract}

\section{Introduction}
\label{sec.Introduction}

Safe and smooth motion planning is an essential skill for autonomous robots not only for their own motion safety and control comfort but also for fostering safe, smooth, and predictable interaction with both humans and other robots.
As a parametric polynomial motion representation, B\'ezier curves find significant applications in safe and smooth robot motion planning of various autonomous robots from flying drones \cite{mellinger_kumar_ICRA2011, richter_bry_roy_ISRR016, ding_gao_wang_shen_TRO2019, gao_wu_lin_shen_ICRA2018, tordesillas_etal_TRO2021}  to autonomous vehicles \cite{gonzalez_etal_TITS2016, ding_zhang_chen_shen_RAL2019, qian_etal_ITSC2016, perez_godoy_villagra_onieva_ICRA2013} to robotic manipulators \cite{ozaki_lin_ICRA1996, hauser_ng-thow-hing_ICRA2010,scheiderer_thun_meisen_FAIM2019, zhao_etal_CYBER2019}.
B\'ezier curves are primarily popular because of their key characteristic properties, including interpolation, convexity, derivative, and norm properties, which are essential for computationally efficient quadratic optimization \cite{mellinger_kumar_ICRA2011, bry_richter_bachrach_roy_IJRR2015, gao_wu_lin_shen_ICRA2018, gao_etal_JFR2019}.
The interpolation and derivative properties of B\'ezier curves allow for explicitly defining the endpoint continuity constraints as a linear equality constraint.
The convexity property of B\'ezier curves allows one to express safety and system requirements as a linear inequality constraint.
The norm of high-order derivatives of polynomial B\'ezier curves are known to yield quadratic optimization objectives. 
However, a general, explicit analytical form for the Hessian of B\'ezier derivative norms is lacking in the existing literature, which limits our intuitive understanding of these optimization objectives.

In this paper, we present a new unifying graph-theoretic approach for defining and understanding quadratic B\'ezier optimization objectives through a \emph{consensus distance} of B\'ezier control points, derived from the weighted interaction graph of these control points.
We demonstrate that standard physical B\'ezier optimization objectives, derived based on the derivative norm of B\'ezier curves, define a consensus distance for B\'ezier control points.
Inspired by this observation, as an alternative consensus distance, we construct new geometric and statistical quadratic B\'ezier optimization objectives based on the finite-difference norm and differential variance of B\'ezier control points.
We present a general explicit analytical formula for the Hessian (i.e., graph Laplacian) of these physical, geometric, and statistical optimization objectives.
We apply and compare these quadratic B\'ezier optimization objectives for safe and smooth motion planning over convex polygonal safe corridors that are automatically constructed along a maximal-clearance minimal-length reference path.

\begin{figure}[t]
\begin{tabular}{@{}c@{\hspace{2mm}}c@{}}
\begin{tabular}{@{}c@{}}
\scalebox{0.55}{
%
%
%

%
%
\begin{tikzpicture}[scale=1.3, thick, font=\large, roundnode/.style={circle, draw}]
    \node[roundnode, label={[font=\large]below:$\bpoint_0$}] (0)at (0,0) {}; 
    \node[roundnode, label={[font=\large]below:$\bpoint_1$}] (1)at (1,0) {}; 
    \node[roundnode, label={[font=\large]below:$\bpoint_{n-1}$}] (2)at (2,0) {};
    \node[roundnode, label={[font=\large]below:$\bpoint_n$}] (3)at (3,0) {}; 
     
    \draw (0) to node[above]{1} (1); 
    \draw [dashed] (1) to node[above]{1} (2); 
    \draw (2) to node[above]{1} (3); 
\end{tikzpicture} } 
\\
\scalebox{0.55}{
%
%
%

%
%
\begin{tikzpicture}[scale=1.3, thick, font=\large, roundnode/.style={circle, draw}]
    \node[roundnode, label={[font=\large]below:$\bpoint_0$}] (0)at (0,0) {}; 
    \node[roundnode, label={[font=\large]below:$\bpoint_1$}] (1)at (1,0) {}; 
    \node[roundnode, label={[font=\large]below:$\bpoint_{n-1}$}] (2)at (2,0) {};
    \node[roundnode, label={[font=\large]below:$\bpoint_n$}] (3)at (3,0) {}; 
     
    \draw (0) to node[above]{$n$} (1); 
    \draw [dashed] (1) to node[above]{$n$} (2); 
    \draw (2) to node[above]{$n$} (3); 
    \draw (0) to[out=60,in=120,looseness=0.6] node[above]{-1} (3);

\end{tikzpicture} }
\\
\scalebox{0.55}{\begin{tikzpicture}[scale=1.3, thick, font=\large, roundnode/.style={circle, draw}]
    \node[roundnode, label={[font=\large]below:$\bpoint_0$}] (0) at (0,0) {}; 
    \node[roundnode, label={[font=\large]below:$\bpoint_1$}] (1) at (1,0) {};
    \node[roundnode, label={[font=\large]below:$\bpoint_2$}] (2) at (2,0) {};
    \node[roundnode, label={[font=\large]below:$\bpoint_{n-2}$}] (3) at (3,0) {}; 
    \node[roundnode, label={[font=\large]below,xshift=-2pt:$\bpoint_{n-1}$}] (4) at (4,0) {};
    \node[roundnode, label={[font=\large]below:$\bpoint_n$}] (5) at (5,0) {}; 
     
    \draw (0) to node[above]{$2$} (1); 
    \draw (1) to node[above]{$4$} (2); 
    \draw [dashed] (2) to node[above]{$4$} (3);
    \draw (3) to node[above]{$4$} (4); 
    \draw (4) to node[above]{$2$} (5); 
    \draw (0) to[out=-70,in=-110,looseness=0.7] node[below]{-1} (2);
    \draw [dashed](1) to[out=70,in=110,looseness=0.7] node[above]{-1} (3);
    \draw [dash pattern=on 3pt off 3pt on 3pt off 5pt] (2) to[out=-70,in=-110,looseness=0.7] node[below]{-1} (4);
    \draw (3) to[out=70,in=110,looseness=0.7] node[above]{-1} (5);
\end{tikzpicture} } 
\end{tabular}
&
\begin{tabular}{@{}c@{}}
\includegraphics[width = 0.5\columnwidth]{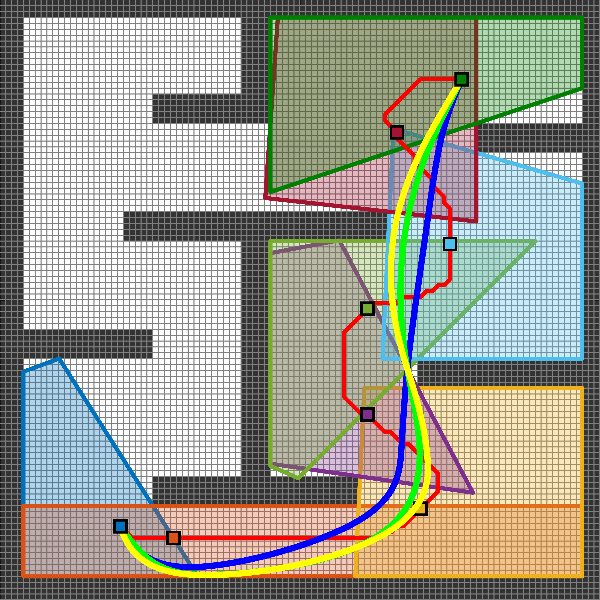} 
\end{tabular}
\end{tabular}
\vspace{-1mm}
\caption{Graph-theoretic B\'ezier curve optimization aims at minimizing the consensus distance $\trace\plist{\tr{\pmat}\mat{L} \pmat}$ of control points $\pmat = \tr{[\ppoint_0, \ldots, \ppoint_n]}$ of B\'ezier curves $\bcurve_{\pmat}(t)$ based on the positive semidefinite Laplacian matrix $\mat{L}$ of an interaction graph of the control points while satisfying linear equality (e.g., endpoint continuity) and inequality (e.g., safety corridor) constraints. The blue, green, and yellow curves demonstrate a collection of B\'ezier curves optimized over safe corridors (polygon patches) with the $C^1$ endpoint continuity constraint using the graph Laplacians of the (left, top) first-order difference norm, (left, middle) first-order difference variance, and (left, bottom) second-order difference norm of B\'ezier control points, respectively.
Convex polygonal safe corridors are automatically generated using separating hyperplanes of corridor centers from obstacles around a maximal-clearance minimal-length reference path (red).} 
\label{fig.GraphTheoreticBezierCurveOptimization}
\vspace{-3mm}
\end{figure}

\subsection{Motivation and Related Literature}

The quadratic nature of the squared norm of high-order derivatives of polynomial B\'ezier curves is a primary reason for their popularity for computationally efficient quadratic optimization in smooth motion planning and path smoothing, especially when dealing with differential flat systems \cite{nieuwstadt_murray_IJRNC1998} such as cars \cite{ding_zhang_chen_shen_RAL2019, qian_etal_ITSC2016}, quadrotors \cite{mellinger_kumar_ICRA2011, richter_bry_roy_ISRR016, ding_gao_wang_shen_TRO2019}, and fixed-wing aircraft \cite{bry_richter_bachrach_roy_IJRR2015}, to name a few, whose control inputs can be expressed as a function of \emph{flat system outputs} (represented by polynomials) and their derivatives.
Most existing work on polynomial trajectory optimization mainly focuses on time-parametrized polynomials to minimize physical quadratic objective functions such as total velocity, acceleration, jerk, and snap \cite{chen_su_shen_ROBIO2015, liu_etal_RAL2017, chen_liu_shen_ICRA2016, gao_etal_JFR2019}.
Inspired by spring forces and potentials, the norm of the second-order finite difference of B\'ezier control points, also known as the elastic band cost \cite{quinlan_khatib_ICRA1993, zhu_schmerling_pavone_CDC2015}, is also used as a geometric optimization objective to decouple time allocation with geometric curve optimization \cite{boyu_etal_RAL2019}.
In all of this existing literature, although the Hessian of a quadratic optimization objective is crucial for quadratic optimization of polynomial curves, it is omitted for brevity, or, in some cases, presented through manual calculation or symbolic and numeric computation.
The lack of a general explicit analytical expression for the Hessian of the optimization function obscures the relationships between potentially related optimization objectives. 
Consequently, it becomes challenging to intuitively understand how these objectives influence polynomial curve parameters.
In this paper, we present an explicit analytical formula for the norm and variance of the high-order derivatives of B\'ezier curves, as well as the finite differences of B\'ezier control points using matrix representations and operations associated with B\'ezier curves.
We also demonstrate that these quadratic objective functions can be unified as a consensus distance of B\'ezier control points, which is derived from the Laplacian matrix of a weighted interaction graph of the control points.
The explicit analytical forms of the Hessian, along with their graph-theoretical interpretations, offer new insights into B\'ezier optimization objectives.
For example, for B\'ezier curves of degree $n = 2$,  B\'ezier velocity norm is equivalent to the variance of the control points, and similarly, B\'ezier acceleration norm is equivalent to the variance of the first-order difference of the control points for $n = 2, 3$.  
We believe that our explicit matrix formulation of B\'ezier curve optimization can reduce the knowledge gap and make it more accessible for researchers to apply these tools effectively.

The convex nature of B\'ezier curves makes them an ideal choice for addressing system and task constraints as a linear inequality constraint in quadratic optimization, particularly for safe and smooth motion planning around obstacles.
Since B\'ezier curves are contained in the convex hull of their control points, such optimization constraints are often enforced by constraining B\'ezier control points inside some convex constraint sets, known as safe corridors \cite{gao_wu_lin_shen_ICRA2018, honig_preiss_kumar_sukhatme_ayanian_TRO2018, choi_curry_elkaim_JAM2010}. 
Safe corridors are often iteratively constructed around a reference path in various shapes, including convex polytopes \cite{liu_etal_RAL2017, deits_tedrake_ICRA2015}, boxes \cite{chen_su_shen_ROBIO2015, chen_liu_shen_ICRA2016, gao_wu_lin_shen_ICRA2018}, and spheres \cite{ding_gao_wang_shen_TRO2019, gao_shen_SSR2016, gao_etal_JFR2019}, with the aim of maximizing their volume to limit the number of safe corridors and, consequently, the B\'ezier curves used in optimization.
Piecewise-linear reference paths are commonly preferred due to their simple construction, e.g., using graph search methods \cite{chen_su_shen_ROBIO2015, liu_etal_RAL2017, chen_liu_shen_ICRA2016} or sampling-based planning algorithms \cite{gao_shen_SSR2016, gao_etal_JFR2019, wang_etal_ICAUS2022}. 
Fast marching with a distance field is applied to find a reference path that balances the distance to obstacles, facilitating the creation of maximal volume safe corridors  \cite{lin_etal_JFR2018, gao_wu_lin_shen_ICRA2018}.
In this paper, to achieve a minimal number of safe corridors and B\'ezier curves, we iteratively construct maximal-volume convex polygonal safe corridors along a maximal-clearance minimal-length reference path using the separating hyperplanes of corridor centers from obstacles \cite{liu_etal_RAL2017, arslan_koditschek_IJRR2019, arslan_pacelli_koditschek_IROS2017}
where we find a maximal-clearance minimal-length reference path over a binary occupancy map using \mbox{graph search with the inverse distance field as a cost map.}

\subsection{Contributions and Organization of the Paper}
\label{sec.Contributions}

In this paper, we introduce a new family of graph-theoretic quadratic B\'ezier optimization objectives that use a consensus distance to measure the alignment between B\'ezier control points based on the Laplacian of their weighted interaction graph.
In \refsec{sec.BezierCurves}, we provide a brief background on B\'ezier curves and highlight their important analytical properties that enable an explicit matrix formulation of B\'ezier curve optimization.
In \refsec{sec.BezierOptimizationObjective}, we present a new quadratic consensus distance for B\'ezier control points and provide physical, geometric, and statistical consensus distance examples.
In \refsec{sec.SafeCorridors}, using separating hyperplanes of obstacles, we describe a systematic and effective way of automatically constructing maximal-volume convex polygonal corridors along a reference path that balances distance to obstacles. 
In \refsec{sec.NumericalSimulations}, we present example applications of B\'ezier curve optimization over safe corridors using various quadratic optimization objectives.   
We conclude in \refsec{sec.Conclusions} with a summary of our contributions and future work.

\section{B\'ezier Curves}
\label{sec.BezierCurves}

In this section, we introduce the notation used throughout the paper and provide a brief background on B\'ezier curves, including their important properties and matrix operations, which are essential to B\'ezier curve optimization in safe and smooth motion planning.   
\begin{definition}\label{def.BezierCurve}
\emph{(B\'ezier Curve)} In a $\bdim$-dimensional Euclidean space $\R^{\bdim}$, a \emph{B\'ezier curve} $\bcurve_{\bpoint_0, \ldots \bpoint_{\bdegree}} (t)$ of degree $\bdegree \in \N$, associated with \emph{control points} $\bpoint_0, \ldots, \bpoint_\bdegree \in \R^{\bdim}$,  is  a parametric $\bdegree^{\text{th}}$-order polynomial curve defined for $0 \leq t \leq 1$ as
\begin{align} \label{eq.BezierCurve}
\bcurve_{\bpoint_0, \ldots, \bpoint_\bdegree}(t) \ldf \sum_{i=0}^\bdegree \bpoly_{i,\bdegree}(t) \bpoint_i, 
\end{align}
where  $\bpoly_{i,\bdegree}(t)$ denotes the $i^{\text{th}}$ Bernstein basis polynomial of degree $\bdegree$ that is  defined for $ i = 0,1, \ldots, \bdegree$ as
\begin{align}\label{eq.BernsteinPolynomial}
\bpoly_{i,\bdegree}(t) \ldf \scalebox{1.2}{$\binom{n}{i}$} t^i (1-t)^{n-i}.
\end{align}
\end{definition}

To effectively work with high-order B\'ezier curves with a large number of control points, it is convenient to use the matrix representation of B\'ezier curves in the form of
\begin{align}
\bcurve_{\bpmat}(t) =    \tr{\bpmat} \bbasis_{\cdegree}(t), 
\end{align}
where $\bpmat \! := \! \tr{\blist{\bpoint_0, \ldots, \bpoint_n}} \! \in\! \R^{(\bdegree+1) \times \bdim}$ denotes  the control point matrix, $\bbasis_{\bdegree}(t) := \tr{\blist{\bpoly_{0,\bdegree}(t), \ldots, \bpoly_{\bdegree,\bdegree}(t)}} \in \R^{\bdegree+1 }$ is the Bernstein basis vector of degree $\bdegree$, and $\tr{\mat{A}}$ denotes the transpose of a matrix $\mat{A}$.

Some key characteristics of B\'ezier and Bernstein polynomials that are important for motion planning  are their interpolation, derivative, and convexity  properties \cite{farouki_CAGD2012, farin_CurvesSurfaces2002, arslan_tiemessen_TRO2022}.

\begin{property} \label{propty.BezierInterpolation}
\emph{(Interpolation)} A B\'ezier curve, $\bcurve_{\pmat}(t)$ associated with control points $\bpmat = \tr{\blist{\bpoint_0, \ldots, \bpoint_\bdegree}}$, smoothly interpolates between its first and last control points, i.e.,
\begin{align}
\bcurve_{\bpmat}(0) &=  \tr{\bpmat} \bbasis_{\bdegree} (0)  = \bpoint_0 
\\
\bcurve_{\bpmat}(1) &= \tr{\bpmat} \bbasis_{\bdegree} (1) = \bpoint_\bdegree 
\end{align}
since Bernstein basis vector \mbox{$\bbasis_{\bdegree}(t) = \tr{\blist{\bpoly_{0,\bdegree}(t), \ldots, \bpoly_{\bdegree,\bdegree}(t)}}$} smoothly interpolates between 
\begin{align}
\bbasis_{\bdegree} (0) = \scalebox{0.8}{$\begin{bmatrix}
1 \\ 0\\ \vdots\\ 0
\end{bmatrix}$} 
\quad  \text{ and } \quad   
\bbasis_{\bdegree} (1)  = \scalebox{0.8}{$\begin{bmatrix}
0\\ \vdots\\ 0\\1
\end{bmatrix}$}.
\end{align}
\end{property}
\noindent The interpolation property of B\'ezier curves allows for describing endpoint constraints, such as start and goal positions, as a linear equality constraint in B\'ezier curve optimization, as illustrated in \reftab{tab.BezierCurveOptimization}.

\begin{property}\label{propty.BezierDerivative}
\emph{(High-Order Derivatives)} The $k^{\text{th}}$-order derivative of a B\'ezier curve of degree $n$, where $k \in \clist{0 \ldots, n}$, is another B\'ezier curve   of degree $(n-k)$ that is given by\footnote{In addition to demonstrating the relationship between continuous differentiation and discrete differencing, the matrix representation of $k^{\text{th}}$-order derivative of B\'ezier curves offers a compact alternative for rewriting the standard derivative form 
$\frac{\diff^{k}}{\diff t ^k} \bcurve_{\bpoint_0, \ldots, \bpoint_{\bdegree}}(t) = \tfrac{n!}{(n - k)!}  \bcurve_{\qpoint_0, \ldots, \qpoint_{n-k}}(t)$ where $\qpoint_i = \sum_{j=0}^{k} \binom{k}{j} (-1)^{k-j} \bpoint_{i+j}$ for $i = 0, \ldots, n-k$ \cite{farin_CurvesSurfaces2002};
similarly, the $k^{\text{th}}$-order derivative of Bernstein basis polynomials is often written as
 $\frac{\diff^k}{\diff t ^k} \bpoly_{i,n}(t) \!=\! \frac{n!}{(n-k)!} \sum\limits_{j=0}^{k} \binom{k}{j} (-1)^{k-j} \bpoly_{i-j, n-k}(t)$ for $i = 0, \ldots, n$ \cite{farin_CurvesSurfaces2002}. 
} 
\begin{align}
\bcurve^{(k)}_{\pmat}(t):= \frac{\diff^{k}}{\diff t ^k} \bcurve_{\pmat}(t) &= \scalebox{0.85}{$\dfrac{n!}{(n - k)!}$}  \bcurve_{\Dmat(n, k) \pmat}(t) 
\end{align}
where $\mat{D}(n,k)$ denotes the $k^{\text{th}}$-order forward finite-difference matrix in $\R^{(n-k+1) \times (n+1)}$ whose elements  are 
\begin{align}\label{eq.HighOrderFiniteDifference}
\blist{\mat{D}(n,k)}_{i+1, j+1} \!=\! \left \{
\begin{array}{@{}c@{}l@{}}
\binom{k}{j-i}\!(-1)^{k - j+ i} & \text{, if } 0 \leq j-i \leq k 
\\
0 & \text{, otherwise.}
\end{array}
\right . \!
\end{align}
for $i = 0, \ldots, (n-k)$ and $j = 0, \ldots, n$, and the $k^{\text{th}}$-order derivative of the Bernstein  basis vector $\bbasis_{n}(t)$ also satisfies
\begin{align}
\frac{\diff^k}{\diff t ^k} \bbasis_{n}(t) = \tfrac{n!}{(n-k)!} \tr{\Dmat(n,k)} \bbasis_{n-k}(t).
\end{align}
\end{property}

\noindent Note that, by definition, $\Dmat(n,0) = \mat{I}$ and $\Dmat(n,k) \mat{1} = \mat{0}$ for any $k=1, \ldots, n$, where $\mat{I}$ denotes the square identity matrix; and $\mat{1}$ and $\mat{0}$ are, respectively,  the column vector of all ones and zeros of appropriate sizes.
If more precision is required, we use subscripts to specify the dimensions of these matrices, for example,  $\mat{I}_{n}$ and $\mat{I}_{n \times n}$ denote the $n \times n$ identity matrix whereas $\mat{0}_{n}$ and $\mat{0}_{n \times 1}$ denote the $n \times 1$ zero column vector.   
For the special case of the first-order B\'ezier derivative $\frac{\diff}{\diff t} \bcurve_{\pmat} (t) = n \bcurve_{\mat{D}(n,1)\pmat} (t) $, 
\begin{align}
\mat{D}(n,1) = \blist{\mat{0}_{n\times 1}, \mat{I}_{n \times n}} - \blist{\mat{I}_{n \times n}, \mat{0}_{n\times 1}} 
\end{align}
which allows us to recursively determine the $k^{\text{th}}$-order forward difference matrix $\Dmat(n,k)$ as
\begin{align}
\mat{D}(n,k) = \mat{D}(n-1, k-1) \mat{D}(n, 1).
\end{align}

It is important to highlight that the high-order derivative property of B\'ezier curves enables the expression of endpoint derivative constraints (e.g., ensuring a specific level of continuity and smoothness between adjacent B\'ezier curve segments as shown in \reftab{tab.BezierCurveOptimization}) as a linear equality constraint of B\'ezier control points as
\begin{align}
\frac{\diff^{k}}{\diff t^{k}}\bcurve_{\pmat}(0) &=  \tfrac{n!}{(n-k)!}\tr{\pmat}\tr{\mat{D}(n,k)} \bbasis_{n-k}(0)   \\
\frac{\diff^{k}}{\diff t^{k}}\bcurve_{\pmat}(1) &=  \tfrac{n!}{(n-k)!}\tr{\pmat}\tr{\mat{D}(n,k)} \bbasis_{n-k}(1).
\end{align}

\begin{table*}[t]
\caption{An Explicit Matrix Formulation for Quadratic Optimization of B\'ezier Curves}
\label{tab.BezierCurveOptimization}
\begin{center}
\vspace{-5mm}
\centering
\begin{tabular}{llcl}
\hline
\hline \\[-1mm]
\multicolumn{4}{c}{\parbox{0.91\textwidth}{Find the control point matrices $\pmat_1, \ldots, \pmat_m \in \R^{(n+1) \times d}$ of $m$ B\'ezier curves of degree $n$ that join $\vect{p}_{\mathrm{start}}, \vect{p}_{\mathrm{goal}} \in \R^{d}$  and minimize the total squared norm of  the $k^{\text{th}}$-order  derivative of B\'ezier curves with a certain degree $C$ of continuity under given linear inequality constraints}}
\vspace{1mm}
\\
\hline
\\
minimize & $\sum\limits_{i=1}^{m} \int_{0}^{1} \norm{\frac{\diff^k}{\diff t^k} \bcurve_{\pmat_i}(t)}^2 \diff t$ & $\Longleftrightarrow$ & $\sum\limits_{i=1}^{m}\trace\plist{\tr{\pmat_i} \tr{\Dmat(n,k)} \HmatN(n-k) \Dmat(n,k) \pmat_i}$
\\
subject to  & $\bcurve_{\pmat_1}(0) = \vect{p}_{\mathrm{start}}$, $\bcurve_{\pmat_m}(1) = \vect{p}_{\mathrm{goal}}$ & $\Longleftrightarrow$& $\tr{\pmat_1} 
\scalebox{0.6}{$\begin{bmatrix} 1 \\ 0 \\ \vdots \\ 0\end{bmatrix}$} 
= \vect{p}_{\mathrm{start}}$, $\tr{\pmat_m} 
\scalebox{0.6}{$\begin{bmatrix}0 \\ \vdots\\ 0\\ 1 \end{bmatrix}$} 
= \vect{p}_{\mathrm{goal}}$ \\
& $\frac{\diff^c}{\diff t^c} \bcurve_{\pmat_i}(1) = \frac{\diff^c}{\diff t^c} \bcurve_{\pmat_{i+1}}(0) \quad$ \parbox{2.4cm}{$\forall i = 1, \dots, m-1$ \\ $\forall c = 0, \ldots, C $} & $\Longleftrightarrow$ & $\tr{\pmat_i}\tr{\Dmat(n,c)} \scalebox{0.6}{$\begin{bmatrix}0 \\ \vdots\\ 0\\ 1 \end{bmatrix}$} 
 = \tr{\pmat_{i+1}}\tr{\Dmat(n,c)} \scalebox{0.6}{$\begin{bmatrix}1 \\ 0 \\ \vdots\\ 0 \end{bmatrix}$} \quad $ 
  \parbox{2.4cm}{$\forall i = 1, \dots, m-1$ \\ $\forall c = 0, \ldots, C $}
\\
\\
& $\mat{A}_i \bcurve_{\pmat_i}(t) \leq \vect{b}_i  \quad \forall t \in [0,1], \forall i = 1, \dots, m-1$ & $\Longleftarrow$ & $\mat{A}_i \tr{\pmat_i} \leq \vect{b}_i \tr{\mat{1}} \quad \forall i = 1, \dots, m-1$
\\[+1mm]
\hline
\hline
\end{tabular}
\end{center}
\end{table*}

\begin{property} \label{propty.BezierConvexity}
\emph{(Convexity)} A B\'ezier curve $\bcurve_{\pmat}(t)$ is defined as a continuous convex combination of its control points $\pmat= \tr{\blist{\bpoint_0, \ldots, \bpoint_\bdegree}}$, and so it is contained in the convex hull, denoted by $\conv$, of the control points, i.e.,
\begin{align}
\bcurve_{\bpmat}(t) \in \conv\plist{\bpoint_0, \ldots, \bpoint_\bdegree} \quad \quad \forall t \in [0,1], 
\end{align}
because Bernstein polynomials are nonnegative and sum to one, i.e., for any $t \in [0,1]$ 
\begin{align}
\bpoly_{i,n}(t) \geq 0, \quad \text{and} \quad \sum_{i=0}^{n} \bpoly_{i,n}(t) = 1. 
\end{align}
\end{property}

\noindent The convexity of B\'ezier curves allows for a simple over-approximation of linear inequality (e.g., safety) constraints of the form $\mat{A} \bcurve_{\pmat}(t) \leq \vect{b}$ by $\mat{A}\tr{\pmat} \leq \vect{b} \tr{\mat{1}}$ in B\'ezier curve optimization as seen in \reftab{tab.BezierCurveOptimization} because the convexity property of B\'ezier curves ensures that
\begin{align}
\mat{A}\tr{\pmat} \leq \vect{b} \tr{\mat{1}} \Longrightarrow \mat{A} \bcurve_{\pmat}(t) \leq \vect{b} \quad \forall t \in [0,1].
\end{align}

Another crucial property of B\'ezier curves for defining quadratic optimization objectives is that the inner product of B\'ezier curves can be explicitly calculated as the inner product of their control points, which is mainly due to the constant definite integral property of Bernstein polynomials (i.e.,$\int_{0}^{1} \bpoly_{k,\bdegree} (t) \diff t = \frac{1}{\bdegree + 1} \quad \forall k = 0, \ldots, \bdegree$).
\begin{lemma}\label{lem.BezierInnerProduct}
\emph{(B\'ezier Inner Product)}
The inner product of a pair of B\'ezier curves $\bcurve_{\pmat}(t)$ and $\bcurve_{\qmat}(t)$ with associated control points \mbox{$\pmat = \tr{[\ppoint_0, \ldots, \ppoint_n]} \!\in\! \R^{(n+1) \times d}$}  and \mbox{$\qmat = \tr{[\qpoint_0, \ldots, \qpoint_m]} \!\in\! \R^{(m+1)\times d}$} satisfies
\begin{align}
\int_{0}^{1} \tr{\bcurve_{\pmat}(t)} \bcurve_{\qmat}(t) \diff t = \trace\plist{\pmat \HmatB(n,m) \tr{\qmat}}
\end{align}
where $\trace$ denotes the matrix trace operator and $\HmatB(n,m) := \int_{0}^{1} \bbasis_{n}(t) \tr{\bbasis_{m}(t)} \diff t$ is the Bernstein outer product matrix in $\R^{(n+1) \times (m+1)}$ whose elements are given for $i = 0, \ldots, n$ and $j = 0, \ldots, m$ by 
\begin{align}\label{eq.BezierInnerProductHessian}
\blist{\HmatB(n,m)}_{i+1, j+1} = \tfrac{1}{m+n+1} \frac{\binom{n}{i} \binom{m}{j}}{\binom{m+n}{i+j}}.  
\end{align}
\end{lemma}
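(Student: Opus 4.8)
The plan is to prove the identity in two stages: first reduce the curve inner product to a trace involving the Bernstein outer-product matrix $\HmatB$, and then evaluate the entries of $\HmatB$ in closed form.

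For the first stage, I would substitute the matrix form of the two curves, $\bcurve_{\pmat}(t) = \tr{\pmat}\bbasis_{n}(t)$ and $\bcurve_{\qmat}(t) = \tr{\qmat}\bbasis_{m}(t)$, so that the integrand becomes the scalar $\tr{\bbasis_{n}(t)}\,\pmat\,\tr{\qmat}\,\bbasis_{m}(t)$. Since this quantity is a $1 \times 1$ matrix, it equals its own trace, and the cyclic invariance of the trace lets me group the $t$-dependent factors $\bbasis_{m}(t)\tr{\bbasis_{n}(t)}$ together. Pulling the constant control-point matrices outside the integral by linearity then isolates $\int_{0}^{1}\bbasis_{n}(t)\tr{\bbasis_{m}(t)}\,\diff t = \HmatB(n,m)$, which reproduces the claimed trace expression (using $\HmatB(m,n) = \tr{\HmatB(n,m)}$ together with the invariance of the trace under transposition to match the stated ordering of the factors). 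The only care needed here is the transpose bookkeeping, so that the degrees $n$ and $m$ land on the correct arguments of $\HmatB$.

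For the second stage, the reduction above leaves the entrywise claim $\blist{\HmatB(n,m)}_{i+1,j+1} = \int_{0}^{1}\bpoly_{i,n}(t)\,\bpoly_{j,m}(t)\,\diff t$, so it suffices to evaluate this scalar integral. Inserting the Bernstein definition $\bpoly_{i,n}(t) = \binom{n}{i}t^{i}(1-t)^{n-i}$ and collecting powers gives $\binom{n}{i}\binom{m}{j}\int_{0}^{1} t^{i+j}(1-t)^{(m+n)-(i+j)}\,\diff t$. I would then recognize the remaining integral as the Euler Beta integral $\int_{0}^{1}t^{a}(1-t)^{b}\,\diff t = \tfrac{a!\,b!}{(a+b+1)!}$ (provable by repeated integration by parts, or cited directly), with $a = i+j$ and $b = (m+n)-(i+j)$, whence $a+b+1 = m+n+1$. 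Substituting and simplifying the factorials into binomial coefficients yields $\tfrac{1}{m+n+1}\tfrac{\binom{n}{i}\binom{m}{j}}{\binom{m+n}{i+j}}$, which is the stated formula; as a consistency check, setting $m=0$ recovers the constant-integral property $\int_{0}^{1}\bpoly_{k,n}(t)\,\diff t = \tfrac{1}{n+1}$ noted just before the lemma.

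The genuine content is the Beta integral, which is the main (though standard) obstacle, together with the factorial-to-binomial simplification needed to match the exact closed form; everything else reduces to linear algebra (trace cyclicity and linearity of integration) and direct substitution.
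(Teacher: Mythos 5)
Your proposal is correct and follows essentially the same route as the paper: the first stage (matrix substitution, treating the scalar integrand as its own trace, cyclic invariance, and pulling the control-point matrices out of the integral) matches the paper's argument step for step. The only difference is in evaluating the entrywise integral $\int_0^1 \bpoly_{i,n}(t)\,\bpoly_{j,m}(t)\,\diff t$: the paper invokes the Bernstein product identity $\bpoly_{i,n}(t)\bpoly_{j,m}(t) = \tfrac{\binom{n}{i}\binom{m}{j}}{\binom{m+n}{i+j}}\bpoly_{i+j,m+n}(t)$ together with the constant-integral property, whereas you expand the definitions and apply the Euler Beta integral directly; these are equivalent standard facts (the Beta integral is precisely what underlies the constant-integral property), and your factorial-to-binomial simplification checks out.
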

\begin{proof}
See \refapp{app.BezierInnerProduct}.
\end{proof}

The inner product property of B\'ezier curves is crucial for recognizing that the squared norm of a B\'ezier curve of degree $n$  is a convex quadratic function of its control points, i.e., 
\begin{align}
\norm{\bcurve_{\bpmat}}^2 := \int_{0}^{1} \norm{\bcurve_{\pmat}(t)}^2 \diff t = \trace \plist{\tr{\pmat} \HmatN(n)\pmat} 
\end{align}
where the elements of the Hessian $\HmatN(n):=\HmatB(n,n)$ of the squared B\'ezier norm are given for $i,j = 0, \ldots, n$ by
\begin{align}\label{eq.BezierNormHessian}
\blist{\HmatN(n)}_{i+1, j+1} = \tfrac{1}{2n+1} \frac{\binom{n}{i} \binom{n}{j}}{\binom{2n}{i+j}}.  
\end{align}
Note that   $\mat{0}\!\preceq\! \HmatN(n) \!\preceq \! \frac{1}{n+1} \mat{I}$ due to Jensen's inequality.\footnote{\label{fn.BezierNormHessianBound} One can verify $\mat{0} \preceq \HmatN(n) \preceq \frac{1}{n+1} \mat{I}$  using the convexity of Bernstein polynomials (\refpropty{propty.BezierConvexity}),  their integral property $\int_{0}^{1} \bpoly_{n}(t) \diff t = \frac{1}{n+1} \mat{1}$, and Jensen's inequality as follows:  for any vector $\vect{v} \!=\! \tr{\blist{v_0, \ldots, v_n}} \!\!\in \!\R^{n+1}$
\begin{align*}
\tr{\vect{v}} \HmatN(n) \vect{v} &= \tr{\vect{v}} \plist{\int_{0}^{1} \bbasis_{n}(t) \tr{\bbasis_{n}(t)} \diff t} \vect{v} =  \int_{0}^{1} (\tr{\bbasis_{n}(t)}\vect{v})^2  \diff t \\
& \leq \int_{0}^{1} \tr{\bbasis_{n}(t)} \scalebox{0.8}{$\begin{bmatrix} v_0^2 \\ \vdots \\ v_n^2 \end{bmatrix}$} \diff t  = \tfrac{1}{n+1}\norm{\vect{v}}^2 
\end{align*} 
where the nonnegativity is due to the squared form. 
}
The analytical form of the Hessian of the B\'ezier derivative norm, combined with B\'ezier matrix operations, enables an explicit and compact formulation for quadratic optimization of B\'ezier curves as shown in \reftab{tab.BezierCurveOptimization}.

Yet another quadratic function of B\'ezier curves that is crucial for B\'ezier curve optimization is variance. 

\begin{lemma}\label{lem.BezierStatistics}
\emph{(B\'ezier Statistics)}
The mean of a B\'ezier curve $\bcurve_{\bpmat}(t)$ and the mean of its controls points $\bpmat=\tr{\blist{\bpoint_0, \ldots, \bpoint_{n}}} $ are the same, whereas the variance of the B\'ezier curve is tightly bounded above by the variance of the control points~as
\begin{align}
\bmean(\bcurve_{\bpmat}) = \bmean(\bpmat) \, \text{ and }\, \bvar(\bcurve_{\bpmat}) \leq \tfrac{1}{n+1} \bvar(\bpmat)
\end{align}
where the overloaded mean and variance operators for both B\'ezier curves and B\'ezier control points are defined as\reffn{fn.ControlPointVariance}
\begin{align}
\!\!\bmean(\bcurve_{\bpmat}) &\!:= \!\!\int\limits_{0}^{1} \!\bcurve_{\bpmat}(t) \diff t,  \!\!\!
 & 
\!\!\!\bvar(\bcurve_{\bpmat}) &\!:=\!  \int\limits_{0}^{1} \norm{\bcurve_{\bpmat}(t) - \bmean(\bcurve_{\bpmat})}^2 \diff t  \nonumber
\\
& & &\!\! =\trace\plist{\tr{\bpmat} \Smat(n) \HmatN(n) \Smat(n) \bpmat},  \nonumber
\\ 
\!\!\bmean(\bpmat) &\!:=\!  \tfrac{1}{n+1}\! \sum_{i=0}^{n} \bpoint_i, 
& 
\bvar(\bpmat) &\!:= \!\tfrac{1}{n+1}\sum_{i=0}^{n} \norm{\bpoint_i - \bpmean(\bcurve_{\bpmat})}^2 \nonumber
\\
& 
& & =  \tfrac{1}{n+1} \trace\plist{\tr{\bpmat}\Smat(n) \bpmat} \label{eq.BezierStatistics}
\end{align}
in terms of the B\'ezier norm Hessian $\HmatN(n)$ in \refeq{eq.BezierNormHessian} and  the mean-shift matrix $\Smat(n)$ that is defined to be
\begin{align}\label{eq.MeanShiftMatrix}
\Smat(n) := \mat{I}_{n+1} - \tfrac{1}{n+1} \mat{1}_{n+1} \tr{\mat{1}}_{n+1}.
\end{align}
\end{lemma}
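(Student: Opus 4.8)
The plan is to reduce every claim to the matrix identity $\bcurve_{\bpmat}(t) = \tr{\bpmat}\bbasis_{n}(t)$ together with two elementary facts about the Bernstein basis vector: the constant-integral property $\int_{0}^{1}\bbasis_{n}(t)\,\diff t = \tfrac{1}{n+1}\mat{1}$ (already used in \reflem{lem.BezierInnerProduct}) and the partition-of-unity property $\tr{\mat{1}}\bbasis_{n}(t) = 1$ from \refpropty{propty.BezierConvexity}. The mean identity is then immediate: I would write $\bmean(\bcurve_{\bpmat}) = \int_{0}^{1}\tr{\bpmat}\bbasis_{n}(t)\,\diff t = \tr{\bpmat}\plist{\int_{0}^{1}\bbasis_{n}(t)\,\diff t} = \tfrac{1}{n+1}\tr{\bpmat}\mat{1} = \tfrac{1}{n+1}\sum_{i=0}^{n}\bpoint_{i} = \bmean(\bpmat)$.

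The crux of the variance formulas is to express the \emph{centered} curve through the mean-shift matrix $\Smat(n)$. Using the partition-of-unity property, I would first verify $\Smat(n)\bbasis_{n}(t) = \bbasis_{n}(t) - \tfrac{1}{n+1}\mat{1}\plist{\tr{\mat{1}}\bbasis_{n}(t)} = \bbasis_{n}(t) - \tfrac{1}{n+1}\mat{1}$, so that $\bcurve_{\bpmat}(t) - \bmean(\bcurve_{\bpmat}) = \tr{\bpmat}\plist{\bbasis_{n}(t) - \tfrac{1}{n+1}\mat{1}} = \tr{\bpmat}\Smat(n)\bbasis_{n}(t)$. Squaring the norm, integrating, and invoking the trace identity of \reflem{lem.BezierInnerProduct} with $\HmatN(n) = \int_{0}^{1}\bbasis_{n}(t)\tr{\bbasis_{n}(t)}\,\diff t$ then gives $\bvar(\bcurve_{\bpmat}) = \int_{0}^{1}\norm{\tr{\bpmat}\Smat(n)\bbasis_{n}(t)}^{2}\diff t = \trace\plist{\tr{\bpmat}\Smat(n)\HmatN(n)\Smat(n)\bpmat}$, where I use that $\Smat(n)$ is symmetric. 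For the discrete variance I would exploit that $\Smat(n)$ is a symmetric idempotent (the orthogonal projection onto the orthogonal complement of $\mat{1}$): its action on $\bpmat$ produces the centered control points, so the rows of $\Smat(n)\bpmat$ are the $\tr{(\bpoint_{i} - \bmean(\bpmat))}$, and hence $\sum_{i=0}^{n}\norm{\bpoint_{i} - \bmean(\bpmat)}^{2} = \normF{\Smat(n)\bpmat}^{2} = \trace\plist{\tr{\bpmat}\Smat(n)\bpmat}$ by $\Smat(n)^{2} = \Smat(n)$; dividing by $n+1$ yields $\bvar(\bpmat) = \tfrac{1}{n+1}\trace\plist{\tr{\bpmat}\Smat(n)\bpmat}$.

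Finally, the inequality would follow from the Loewner ordering $\mat{0}\preceq\HmatN(n)\preceq\tfrac{1}{n+1}\mat{I}$ established by Jensen's inequality in the preceding footnote. Conjugating this bound by the symmetric idempotent $\Smat(n)$ preserves the ordering and collapses the middle identity via $\Smat(n)\mat{I}\Smat(n) = \Smat(n)^{2} = \Smat(n)$, giving $\Smat(n)\HmatN(n)\Smat(n)\preceq\tfrac{1}{n+1}\Smat(n)$; taking the trace against the positive-semidefinite matrix $\bpmat\tr{\bpmat}$ and using cyclicity then bounds $\bvar(\bcurve_{\bpmat}) = \trace\plist{\tr{\bpmat}\Smat(n)\HmatN(n)\Smat(n)\bpmat}$ above by $\tfrac{1}{n+1}\trace\plist{\tr{\bpmat}\Smat(n)\bpmat}$, which the control-point variance formula of the previous step rewrites as the stated bound on $\bvar(\bcurve_{\bpmat})$ in terms of $\bvar(\bpmat)$.

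I expect the main obstacle to be exactly this last step: establishing that conjugation by $\Smat(n)$ preserves the positive-semidefinite ordering while combining the partition-of-unity, the idempotence of $\Smat(n)$, and the Jensen bound correctly, since the precise constant appearing in the final inequality is dictated entirely by the spectral bound on $\HmatN(n)$ restricted to the range of $\Smat(n)$. It is worth double-checking that constant against the eigenstructure of $\HmatN(n)$ (for instance, $\mat{1}$ is an eigenvector with eigenvalue $\tfrac{1}{n+1}$, so the relevant bound is governed by the eigenvalues on $\mat{1}^{\perp}$). Everything else is routine trace algebra.
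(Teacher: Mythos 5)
Your proposal is correct and follows essentially the same route as the paper's proof: the mean identity via the constant-integral property of the Bernstein basis, the key centering identity $\bcurve_{\bpmat}(t) - \bmean(\bcurve_{\bpmat}) = \bcurve_{\Smat(n)\bpmat}(t)$ feeding into the inner-product lemma, and the final bound from $\mat{0} \preceq \HmatN(n) \preceq \tfrac{1}{n+1}\mat{I}$ together with the idempotence of $\Smat(n)$. You merely spell out a few steps the paper leaves implicit (the partition-of-unity verification of the centering identity and the conjugation argument for the Loewner ordering), and your closing remark about the spectrum of $\HmatN(n)$ on $\mat{1}^{\perp}$ is a fair caveat about the word ``tightly,'' though it does not affect the validity of the inequality itself.
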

\begin{proof}
See \refapp{app.BezierStatistics}.
\end{proof}
\noindent It is useful to observe that the mean-shift matrix is a projection operator, i.e.,  $\Smat(n)\Smat(n) = \Smat(n)$, that is symmetric and positive semidefinite, and it satisfies $\Smat(n) \mat{1} = \mat{0}$.

\addtocounter{footnote}{1}
\footnotetext{\label{fn.ControlPointVariance}The variance of B\'ezier control points can be equivalently expressed as
\begin{align*}
\bvar\plist{\pmat} &= \tfrac{1}{(n+1)}\trace\plist{\tr{\pmat}\Smat(n)\pmat}
= \tfrac{1}{n+1}\!\sum_{i=0}^{n} \norm{\bpoint_i \!-\! \tfrac{1}{n+1}\!\sum_{j=0}^{n} \bpoint_j}^2
\\
 & = \tfrac{1}{(n+1)^2} \! \sum_{i=0}^{n} \sum_{j = 0}^{n} \tfrac{1}{2}\norm{\ppoint_i \!-\! \ppoint_j}^2. 
\end{align*}  
 }

\section{\!\!\scalebox{0.97}{\mbox{Graph-Theoretic B\'ezier Optimization Objective\!}}}
\label{sec.BezierOptimizationObjective}

In this section, we introduce a new consensus-based quadratic B\'ezier optimization objective that is designed to assess the agreement and alignment of B\'ezier control points based on the Laplacian matrix of their interaction graph.
We present example physical, geometric, and statistical consensus-based optimization objectives that are constructed based on differential norms and variances.

\subsection{Consensus-Based B\'ezier Optimization Objective}

A B\'ezier curve is, by definition, constructed based on a continuous convex combination of its control points to smoothly interpolate and connect its first and last control points (see \refdef{def.BezierCurve}, \refpropty{propty.BezierInterpolation}, and \refpropty{propty.BezierConvexity}). 
Hence, the shape of a B\'ezier curve (whether it is straight or oscillatory) is completely determined by its control points.
A smooth and high-quality B\'ezier curve that satisfies a set of given geometric (e.g., boundary, length, intersection) conditions requires a certain level of agreement or alignment between its control points.
Accordingly, inspired by consensus in networked multi-agent systems \cite{olfati_fax_murray_IEEE2007}, to determine the agreement of control points of a B\'ezier curve, we assume that B\'ezier control points interact over a weighted undirected connected graph (without self loops) specified\footnote{We find it convenient to specify an interaction graph using a graph Laplacian matrix instead of weighted adjacency matrix because it is usually difficult to ensure the positive semidefiniteness of graph Laplacian for negatively weighted graphs.} by a positive semidefinite\footnote{$\PSDM^{n}$ is the set of symmetric and positive semidefinite matrices in $\R^{n \times n}$.} Laplacian matrix $\mat{L} \in \PSDM^{n+1}$  with $\mat{L} \mat{1} = 0$, where the interaction weights $\mat{W} \in \R^{(n+1) \times (n+1)}$ between the control points for $i,j = 0, \ldots, n$ are given by
\begin{align}
\blist{\mat{W}}_{i+1, j+1} = \left \{ 
\begin{array}{@{}cl@{}}
0 & \text{ if } i = j \\
-[\mat{L}]_{i+1,j+1} & \text{ otherwise.}
\end{array}
\right.
\end{align}
Note that an interaction weight between a pair of control points might be positive or negative. A positive interaction weight signifies similarity among control points, whereas a negative interaction weight promotes divergence or dissimilarity among control points.
Accordingly, as a new unifying B\'ezier curve optimization objective, we consider a common measure of distance from consensus \cite{olfati_fax_murray_IEEE2007} that is defined based on the interaction graph Laplacian as follows.

\begin{definition} \label{def.ConsensusDistance}
\emph{(B\'ezier Consensus Distance)}
Given a symmetric and positive semidefinite interaction graph Laplacian $\mat{L} \in \PSDM^{\bdegree+1}$ with $\mat{L} \mat{1} = \mat{0}$, the \emph{consensus distance}, denoted by $\bconsdist_{\mat{L}}$, of the control points $\pmat \in \R^{(\bdegree +1)\times \bdim}$ of a B\'ezier curve $\bcurve_{\bpmat}(t)$ is a measure of disagreement of the control points that is quantified as $\bconsdist_{\mat{L}}(\pmat):= \trace\plist{\tr{\mat{P}} \mat{L} \mat{P}}$.
\end{definition}
 
\noindent The singularity of graph Laplacian, i.e., $\mat{L} \mat{1} \! =\! \mat{0}$, implies that the consensus distance of B\'ezier control points is determined by the relative differences of the control points rather than absolute control point locations.
For a connected interaction graph with positive weights (i.e., the graph Laplacian has exactly one zero-eigenvalue), the perfect consensus of B\'ezier control points (i.e., $\trace\plist{\tr{\pmat} \mat{L} \pmat} = 0$) means a constant B\'ezier curve (i.e., $\bcurve_{\pmat}(t) = \ppoint$ since $\pmat = \tr{\blist{\ppoint, \ldots, \ppoint}}$), which corresponds to zero motion at $\ppoint$.  
In \reftab{tab.QuadraticBezierOptimizationObjectives}, we present example physical, geometric, and statistical consensus distances for B\'ezier curve optimization whose graph-theoretic interpretation is illustrated  in \reftab{tab.GraphTheoreticOptimizatinoObjectives} and discussed below.

\subsection{Physical B\'ezier Optimization Objectives}

In polynomial trajectory optimization, a widely-used physical optimization objective is the squared  norm of B\'ezier derivatives, for example, to minimize total velocity, acceleration, jerk, snap and their various combinations.   

\begin{proposition}\label{prop.BezierDerivativeNorm}
\emph{(B\'ezier Derivative Norm)}
The squared norm of $k^{\text{th}}$-order derivative of an $n^{\text{th}}$-order B\'ezier curve associated with control point matrix $\pmat \in \R^{(n+1)\times \cdim}$ is given by
\begin{align}\label{eq.BezierDerivativeNorm}
\int_{0}^{1} \norm{\tfrac{\diff^k}{\diff t^k} \bcurve_{\pmat}(t)}^2 \diff t = \frac{n!^2}{(n-k)!^2}\trace\plist{\tr{\pmat} \HmatDN(n,k) \pmat} 
\end{align}   
where  the Hessian $\HmatDN(n,k)$ of B\'ezier derivative norm
\begin{align}\label{eq.BezierDerivativeNormHessian}
\HmatDN(n,k) := \tr{\Dmat(n,k)} \HmatN(n - k) \Dmat(n,k)
\end{align}
is  determined by the high-order finite-difference matrix $\Dmat(n,k)$ in \refeq{eq.HighOrderFiniteDifference} and the B\'ezier norm Hessian $\HmatN(n)$ in \refeq{eq.BezierNormHessian}.  
\end{proposition}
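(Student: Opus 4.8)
The plan is to reduce the claim to two facts already established in the excerpt: the high-order derivative property (\refpropty{propty.BezierDerivative}) and the closed-form expression for the squared B\'ezier norm stated after \reflem{lem.BezierInnerProduct}, whose Hessian $\HmatN(\cdot)$ is given in \refeq{eq.BezierNormHessian}. The essential observation is that differentiating $k$ times does not leave the class of B\'ezier curves: it yields another B\'ezier curve, of the reduced degree $n-k$, whose control point matrix is $\Dmat(n,k)\pmat$, up to the scalar factor $\tfrac{n!}{(n-k)!}$. Once the integrand is rewritten in this form, the integral is simply a squared B\'ezier norm of a lower-degree curve, for which an explicit Hessian is already available.

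First I would invoke \refpropty{propty.BezierDerivative} to write $\tfrac{\diff^k}{\diff t^k}\bcurve_{\pmat}(t) = \tfrac{n!}{(n-k)!}\,\bcurve_{\qmat}(t)$, where $\qmat := \Dmat(n,k)\pmat \in \R^{(n-k+1)\times \cdim}$ is the control point matrix of the $k^{\text{th}}$-order derivative curve. Since $\tfrac{n!}{(n-k)!}$ is a positive scalar, pulling it out of the pointwise Euclidean norm and squaring contributes the prefactor $\tfrac{n!^2}{(n-k)!^2}$, leaving $\int_0^1 \norm{\bcurve_{\qmat}(t)}^2 \diff t$. Next I would apply the squared-norm identity $\int_0^1 \norm{\bcurve_{\qmat}(t)}^2 \diff t = \trace\plist{\tr{\qmat}\HmatN(n-k)\qmat}$, which is the specialization of \reflem{lem.BezierInnerProduct} to $\qmat=\qmat$ at degree $n-k$ with $\HmatN(n-k)=\HmatB(n-k,n-k)$. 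Finally, substituting $\qmat=\Dmat(n,k)\pmat$ and using $\tr{\plist{\Dmat(n,k)\pmat}} = \tr{\pmat}\tr{\Dmat(n,k)}$ yields $\trace\plist{\tr{\pmat}\tr{\Dmat(n,k)}\HmatN(n-k)\Dmat(n,k)\pmat}$, which by the definition \refeq{eq.BezierDerivativeNormHessian} of $\HmatDN(n,k)$ is precisely the asserted right-hand side of \refeq{eq.BezierDerivativeNorm}.

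Candidly, there is no real obstacle here; the result is a routine composition of known identities, which is likely why the paper streamlines it. The only point demanding care is the bookkeeping of the degree reduction: the derivative curve lives in degree $n-k$, so the Hessian invoked must be $\HmatN(n-k)$ rather than $\HmatN(n)$, and $\Dmat(n,k)$ must correctly map the $(n{+}1)$ original control points to the $(n{-}k{+}1)$ control points of the derivative, so that the matrix dimensions in the congruence $\tr{\Dmat(n,k)}\HmatN(n-k)\Dmat(n,k)$ are compatible. I would also remark in passing that this congruence structure makes $\HmatDN(n,k)$ symmetric and positive semidefinite, inheriting these properties from $\HmatN(n-k)$, which confirms that \refeq{eq.BezierDerivativeNorm} is a genuine convex quadratic form in $\pmat$ as required by the quadratic optimization in \reftab{tab.BezierCurveOptimization}.
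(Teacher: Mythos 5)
Your argument is correct and is essentially the paper's own proof: both reduce the claim to Property \ref{propty.BezierDerivative} (the $k^{\text{th}}$ derivative is $\tfrac{n!}{(n-k)!}\bcurve_{\Dmat(n,k)\pmat}$) combined with the squared-norm identity from Lemma \ref{lem.BezierInnerProduct}, then substitute $\qmat=\Dmat(n,k)\pmat$ into $\trace\plist{\tr{\qmat}\HmatN(n-k)\qmat}$. Your added remarks on degree bookkeeping and positive semidefiniteness are accurate but not needed beyond what the paper states.
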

\begin{proof}
The statement is a consequence of the derivative and inner-product properties of B\'ezier curves in \refpropty{propty.BezierDerivative} and \reflem{lem.BezierInnerProduct}, respectively, i.e.,  $\tfrac{\diff^k}{\diff t^k} \bcurve_{\pmat}(t) = \frac{n!}{(n-k)!}\bcurve_{\Dmat(n,k)\pmat} (t)$ and   $\int_{0}^{1} \norm{\bcurve_{\pmat}(t)}^2 \diff t = \trace\plist{\tr{\pmat} \HmatN(n) \pmat}$.
\end{proof}

\noindent Note that $\HmatDN(n,k) = \tr{\Dmat(n,k)} \HmatN(n - k) \Dmat(n,k)$ is a symmetric and positive semidefinite matrix and satisfies $\HmatDN(n,k) \mat{1} = \mat{0}$  for $k = 1, \ldots, n$ because $\HmatN(n-k)$ is positive definite and $\Dmat(n,k)\mat{1} = \mat{0}$.
Hence, from a graph-theoretic perspective, one may consider the Hessian $\HmatDN(n,k)$ of B\'ezier derivative norm as a graph Laplacian matrix to better understand the interaction relation between B\'ezier control points.
In \reftab{tab.GraphTheoreticOptimizatinoObjectives}, we illustrate the weighted interaction graphs of B\'ezier control points corresponding to B\'ezier velocity and acceleration norms using the associated Hessian matrices $\HmatDN(n,1)$ and $\HmatDN(n,2)$ for $n = 2, 3, 4$.

\begin{table}[t]
\caption{Example Consensus-Based B\'ezier Optimization Objectives}
\label{tab.QuadraticBezierOptimizationObjectives}
\vspace{-2mm}
\begin{tabular}{@{}c@{}c@{}}
\hline
\hline \\[-2mm]
Objection Function & Consensus Distance \vspace{1mm}\\
\hline
\\[-2mm]
\parbox{2.3cm}{\centering Control-Point \\ Difference Norm \\ $\norm{\Dmat(n,k) \pmat}^2$} & $\trace\plist{\tr{\pmat} \tr{\Dmat(n,k)}  \Dmat(n,k) \pmat}$ 
\\
\\
\parbox{2.3cm}{\centering B\'ezier Derivative \\ Norm \\ $\norm{\bcurve_{\Dmat(n,k) \pmat}}^2$} & $\trace\plist{\tr{\pmat} \tr{\Dmat(n,k)} \HmatN(n\!-\!k)  \Dmat(n,k) \pmat}$
\\
\\
\parbox{2.3cm}{\centering Control-Point \\ Difference Variance \\ $\bvar\plist{\Dmat(n,k) \pmat}$} & $\trace\plist{\tr{\pmat} \tr{\Dmat(n,k)} \Smat(n\!-\!k) \Dmat(n,k) \pmat}$ 
\\
\\
\parbox{2.3cm}{\centering B\'ezier Derivative Variance \\ $\bvar\plist{\bcurve_{\Dmat(n,k) \pmat}}$} & $\hspace{-0.5mm}\trace\plist{\!\tr{\pmat}  \tr{\Dmat(n,k)\!} \Smat(n\!-\!k)\HmatN(n\!-\!k)\Smat(n\!-\!k)  \Dmat(n,k) \pmat\!}\!$\hspace{-1mm}
\vspace{1mm}
\\
\hline
\hline
\end{tabular}
\end{table}

\subsection{Geometric B\'ezier Optimization Objectives}

Replacing continuous differentiation with finite differencing often yields geometrically more intuitive  B\'ezier optimization objectives.
More specifically, instead of B\'ezier derivative norm $\int_{0}^{1}\norm{\frac{\diff^k}{\diff t^k} \bcurve_{\pmat}(t)} \diff t$, as a geometric optimization objective, we consider the squared norm of finite difference of B\'ezier control points that is given by 
\begin{align}\label{eq.BezierDifferenceNorm}
\norm{\Dmat(n,k) \pmat}^2 = \trace\plist{\tr{\pmat} \HmatDDN(n,k) \pmat}
\end{align}
where the Hessian $\HmatDDN(n,k)$ of B\'ezier control-point difference norm is defined in terms of the finite-difference matrix $\Dmat(n,k)$ in \refeq{eq.HighOrderFiniteDifference} as
\begin{align}
\HmatDDN(n,k) := \tr{\Dmat(n,k)} \Dmat(n,k).
\end{align}
In \reftab{tab.GraphTheoreticOptimizatinoObjectives}, we present the corresponding interaction graphs of the first-order and second-order difference norm of B\'ezier control points.
As seen in \reftab{tab.GraphTheoreticOptimizatinoObjectives}, as a special case of $n = 2$, the second-order difference and derivative norms define the same interaction relation between the control points. 
However, the finite-difference norm of B\'ezier control points often results in reduced interaction between control points compared to the B\'ezier derivative norm, which allows for an intuitive interpretation of geometric optimization objectives.   
Below, we present the explicit forms of the norm of the first- and second-order differences of control points to illustrate the geometric intuition behind the corresponding B\'ezier optimization objectives.

The first-order difference norm of B\'ezier control points defines an upper bound on the squared length of the B\'ezier (control) polygon that connects B\'ezier control points with straight lines from the first control point to the last control point, i.e.,
\begin{align*}
 \norm{\Dmat(n,1) \pmat}^2 
 &= \sum_{k=1}^{n} \norm{\bpoint_{k}\! -\! \bpoint_{k-1}}^2 \geq \frac{1}{n} \plist{\sum_{k=1}^{n}\norm{\bpoint_{k}\! -\! \bpoint_{k-1}}\!}^{\!2}
\end{align*}     
where the inequality follows from Jensen's inequality.

The second-order B\'ezier difference norm measures the total quadratic inequality gap between three consecutive B\'ezier control points, i.e.,
\begin{align*}
\norm{\Dmat(n,k) \pmat}^2 
& = \sum_{i=1}^{n-1} \norm{\ppoint_{i+1} \!- \!2\ppoint_{i}\! +\! \ppoint_{i-1}}^2 \\
& \hspace{-15mm} = \sum_{i=1}^{n-1} 2 \norm{\ppoint_{i+1} \!-\! \ppoint_{i}}^2 \!+\! 2\norm{\ppoint_i \!-\! \ppoint_{i-1}}^2 - \norm{\ppoint_{i+1} \!-\! \ppoint_{i-1}}^2 
\end{align*} 
where $\norm{\ppoint_{i+1} - \ppoint_{i-1}}^2 \leq 2 \norm{\ppoint_{i+1} - \ppoint_{i}}^2 + 2\norm{\ppoint_i - \ppoint_{i-1}}^2$ is known as the quadratic inequality that is tight if and only if  $\bpoint_k$ is the midpoint of $\bpoint_{k+1}$ and $\bpoint_{k-1}$.
Hence, the second-order B\'ezier difference norm measures the homogeneity (i.e., the linearity and uniformity) of control points and 
aims at uniformly placing control points along a straight line.

\begin{table}[t]
\caption{Graph-Theoretic Optimization Objectives \\ and Interaction Graphs\reffn{fn.InteractionGraphWeights} of B\'ezier Control Points}
\label{tab.GraphTheoreticOptimizatinoObjectives} 
\center
\vspace{-4mm}
\begin{tabular}{c@{}c@{\hspace{1mm}}c@{\hspace{1mm}}c}
\hline \hline \\[-2mm]
\begin{tabular}{@{}c@{}}
First-Order \\ Derivative Norm  \\
\scalebox{0.8}{$\trace\plist{\tr{\pmat} \HmatDN(n,1) \pmat}$}
\end{tabular} 
& 
\begin{tabular}{@{}c@{}}
\scalebox{0.45}{\begin{tikzpicture}[scale=1.3, thick, font=\large, roundnode/.style={circle, draw}]

\node[roundnode] (0) at (-0.8660, -0.5000) {$0$}; 
\node[roundnode] (1) at (0.0000, 1.0000) {$1$}; 
\node[roundnode] (2) at (0.8660, -0.5000) {$2$}; 
    
\draw (0) to node[midway, left, xshift=-3pt]{$1$} (1); 
\draw (1) to node[midway, right, xshift=3pt]{$1$} (2); 
\draw (0) to node[midway, below, xshift=0pt]{$1$} (2); 

\end{tikzpicture}} 
\end{tabular}
& 
\begin{tabular}{@{}c@{}}
\scalebox{0.45}{\begin{tikzpicture}[scale=2.0, thick, font=\large, roundnode/.style={circle, draw}]

\node[roundnode] (0) at (0,0) {$0$}; 
\node[roundnode] (1) at (0,1) {$1$}; 
\node[roundnode] (2) at (1,1) {$2$}; 
\node[roundnode] (3) at (1,0) {$3$}; 

\draw (0) to node[midway, left, yshift=0pt]{$3$} (1);
\draw (0) to node[midway, left, xshift=-4pt]{$2$} (2);
\draw (0) to node[midway, below, yshift=0pt]{$1$} (3);
\draw (1) to node[midway, above, yshift=0pt]{$-1$} (2);
\draw (1) to node[midway, right, xshift=4pt]{$2$} (3); 
\draw (2) to node[midway, right, yshift=0pt]{$3$} (3); 
\end{tikzpicture}} 
\end{tabular}
&
\begin{tabular}{@{}c@{}}
\scalebox{0.45}{\begin{tikzpicture}[scale=1.5, thick, font=\large, roundnode/.style={circle, draw}]

\node[roundnode] (0) at (-0.5878,-0.8090) {$0$}; 
\node[roundnode] (1) at (-0.9511,0.3090) {$1$}; 
\node[roundnode] (2) at (0.0000,1.0000) {$2$}; 
\node[roundnode] (3) at (0.9511,0.3090) {$3$}; 
\node[roundnode] (4) at (0.5878,-0.8090) {$4$}; 

\draw (0) to node[midway, left, xshift=-2pt]{$10$} (1); 
\draw (0) to node[left, xshift=0pt, yshift=+2pt]{$6$} (2); 
\draw (0) to node[below right, xshift=-2pt, yshift=+2pt]{$3$} (3);
\draw (0) to node[below, xshift=0pt]{$1$} (4);
\draw (1) to node[above left, xshift=0pt]{$-3$} (2); 
\draw (1) to node[above, xshift=0pt]{$2$} (3); 
\draw (1) to node[below left, xshift=+2pt, yshift=+2pt]{$3$} (4);
\draw (2) to node[above right, xshift=0pt]{$-3$} (3); 
\draw (2) to node[right, xshift=0pt, yshift=+2pt]{$6$} (4); 
\draw (3) to node[right, xshift=2pt]{$10$} (4); 

\end{tikzpicture}}
\end{tabular}
\\
\begin{tabular}{@{}c@{}}
First-Order \\ Difference Norm  \\
\scalebox{0.8}{$\sum\limits_{i=1}^{n} \norm{\bpoint_i - \bpoint_{i-1}}^2$}\\
\scalebox{0.8}{$\trace\plist{\tr{\pmat} \HmatDDN(n,1) \pmat}$}
\end{tabular} 
& 
\begin{tabular}{@{}c@{}}
\scalebox{0.45}{\begin{tikzpicture}[scale=1.3, thick, font=\large, roundnode/.style={circle, draw}]

\node[roundnode] (0) at (-0.8660, -0.5000) {$0$}; 
\node[roundnode] (1) at (0.0000, 1.0000) {$1$}; 
\node[roundnode] (2) at (0.8660, -0.5000) {$2$}; 
    
\draw (0) to node[midway, left, xshift=-3pt]{$1$} (1); 
\draw (1) to node[midway, right, xshift=3pt]{$1$} (2); 

\end{tikzpicture}} 
\end{tabular}
& 
\begin{tabular}{@{}c@{}}
\scalebox{0.45}{\begin{tikzpicture}[scale=2.0, thick, font=\large, roundnode/.style={circle, draw}]

\node[roundnode] (0) at (0,0) {$0$}; 
\node[roundnode] (1) at (0,1) {$1$}; 
\node[roundnode] (2) at (1,1) {$2$}; 
\node[roundnode] (3) at (1,0) {$3$}; 

\draw (0) to node[midway, left, yshift=0pt]{$1$} (1); 
\draw (1) to node[midway, above, yshift=0pt]{$1$} (2); 
\draw (2) to node[midway, right, yshift=0pt]{$1$} (3); 

\end{tikzpicture}} 
\end{tabular}
&
\begin{tabular}{@{}c@{}}
\scalebox{0.45}{\begin{tikzpicture}[scale=1.5, thick, font=\large, roundnode/.style={circle, draw}]

\node[roundnode] (0) at (-0.5878,-0.8090) {$0$}; 
\node[roundnode] (1) at (-0.9511,0.3090) {$1$}; 
\node[roundnode] (2) at (0.0000,1.0000) {$2$}; 
\node[roundnode] (3) at (0.9511,0.3090) {$3$}; 
\node[roundnode] (4) at (0.5878,-0.8090) {$4$}; 

\draw (0) to node[midway, left, xshift=-2pt]{$1$} (1); 
\draw (1) to node[midway, above left, xshift=0pt]{$1$} (2); 
\draw (2) to node[midway, above right, xshift=0pt]{$1$} (3); 
\draw (3) to node[midway, right, xshift=2pt]{$1$} (4); 

\end{tikzpicture}}
\end{tabular}
\\
\begin{tabular}{@{}c@{}}
Zeroth-Order\\
Difference Variance  \\
\scalebox{0.8}{$\sum\limits_{i=0}^{n} \sum\limits_{j=0}^{n} \frac{1}{2} \norm{\bpoint_i - \bpoint_{j}}^2$}\\
\scalebox{0.8}{$\trace\plist{\tr{\pmat} \HmatDDV(n,0) \pmat}$}
\end{tabular} 
& 
\begin{tabular}{@{}c@{}}
\scalebox{0.45}{\begin{tikzpicture}[scale=1.3, thick, font=\large, roundnode/.style={circle, draw}]

\node[roundnode] (0) at (-0.8660, -0.5000) {$0$}; 
\node[roundnode] (1) at (0.0000, 1.0000) {$1$}; 
\node[roundnode] (2) at (0.8660, -0.5000) {$2$}; 
    
\draw (0) to node[midway, left, xshift=-3pt]{$1$} (1); 
\draw (1) to node[midway, right, xshift=3pt]{$1$} (2); 
\draw (0) to node[midway, below, xshift=0pt]{$1$} (2); 

\end{tikzpicture}} 
\end{tabular}
& 
\begin{tabular}{@{}c@{}}
\scalebox{0.45}{\begin{tikzpicture}[scale=2.0, thick, font=\large, roundnode/.style={circle, draw}]

\node[roundnode] (0) at (0,0) {$0$}; 
\node[roundnode] (1) at (0,1) {$1$}; 
\node[roundnode] (2) at (1,1) {$2$}; 
\node[roundnode] (3) at (1,0) {$3$}; 

\draw (0) to node[midway, left, yshift=0pt]{$1$} (1);
\draw (0) to node[midway, left, xshift=-4pt]{$1$} (2);
\draw (0) to node[midway, below, yshift=0pt]{$1$} (3);
\draw (1) to node[midway, above, yshift=0pt]{$1$} (2);
\draw (1) to node[midway, right, xshift=4pt]{$1$} (3); 
\draw (2) to node[midway, right, yshift=0pt]{$1$} (3); 
\end{tikzpicture}} 
\end{tabular}
&
\begin{tabular}{@{}c@{}}
\scalebox{0.45}{\begin{tikzpicture}[scale=1.5, thick, font=\large, roundnode/.style={circle, draw}]

\node[roundnode] (0) at (-0.5878,-0.8090) {$0$}; 
\node[roundnode] (1) at (-0.9511,0.3090) {$1$}; 
\node[roundnode] (2) at (0.0000,1.0000) {$2$}; 
\node[roundnode] (3) at (0.9511,0.3090) {$3$}; 
\node[roundnode] (4) at (0.5878,-0.8090) {$4$}; 

\draw (0) to node[midway, left, xshift=-2pt]{$1$} (1); 
\draw (0) to node[left, xshift=0pt, yshift=+2pt]{$1$} (2); 
\draw (0) to node[below right, xshift=-2pt, yshift=+2pt]{$1$} (3);
\draw (0) to node[below, xshift=0pt]{$1$} (4);
\draw (1) to node[above left, xshift=0pt]{$1$} (2); 
\draw (1) to node[above, xshift=0pt]{$1$} (3); 
\draw (1) to node[below left, xshift=+2pt, yshift=+2pt]{$1$} (4);
\draw (2) to node[above right, xshift=0pt]{$1$} (3); 
\draw (2) to node[right, xshift=0pt, yshift=+2pt]{$1$} (4); 
\draw (3) to node[right, xshift=2pt]{$1$} (4); 

\end{tikzpicture}}
\end{tabular}
\\
\begin{tabular}{@{}c@{}}
Second-Order \\ Derivative Norm  \\
\scalebox{0.8}{$\trace\plist{\tr{\pmat} \HmatDN(n,2) \pmat}$}
\end{tabular} 
& 
\begin{tabular}{@{}c@{}}
\scalebox{0.45}{\begin{tikzpicture}[scale=1.3, thick, font=\large, roundnode/.style={circle, draw}]

\node[roundnode] (0) at (-0.8660, -0.5000) {$0$}; 
\node[roundnode] (1) at (0.0000, 1.0000) {$1$}; 
\node[roundnode] (2) at (0.8660, -0.5000) {$2$}; 
    
\draw (0) to node[above left, xshift=0pt]{$2$} (1); 
\draw (0) to node[below, xshift=0pt]{$-1$} (2); 
\draw (1) to node[above right, xshift=0pt]{$2$} (2); 

\end{tikzpicture}} 
\end{tabular}
& 
\begin{tabular}{@{}c@{}}
\scalebox{0.45}{\begin{tikzpicture}[scale=2.0, thick, font=\large, roundnode/.style={circle, draw}]

\node[roundnode] (0) at (0,0) {$0$}; 
\node[roundnode] (1) at (0,1) {$1$}; 
\node[roundnode] (2) at (1,1) {$2$}; 
\node[roundnode] (3) at (1,0) {$3$}; 

\draw (0) to node[midway, left, yshift=0pt]{$3$} (1);
\draw (0) to node[midway, below, yshift=0pt]{$-1$} (3);
\draw (1) to node[midway, above, yshift=0pt]{$3$} (2);
\draw (2) to node[midway, right, yshift=0pt]{$3$} (3); 

\end{tikzpicture}} 
\end{tabular}
&
\begin{tabular}{@{}c@{}}
\scalebox{0.45}{\begin{tikzpicture}[scale=1.5, thick, font=\large, roundnode/.style={circle, draw}]

\node[roundnode] (0) at (-0.5878,-0.8090) {$0$}; 
\node[roundnode] (1) at (-0.9511,0.3090) {$1$}; 
\node[roundnode] (2) at (0.0000,1.0000) {$2$}; 
\node[roundnode] (3) at (0.9511,0.3090) {$3$}; 
\node[roundnode] (4) at (0.5878,-0.8090) {$4$}; 

\draw (0) to node[midway, left, xshift=-2pt]{$9$} (1); 
\draw (0) to node[left, xshift=+2pt, yshift=+2pt]{$-1$} (2); 
\draw (0) to node[below, xshift=+2pt, yshift=+2pt]{$-1$} (3);
\draw (0) to node[below, xshift=0pt]{$-1$} (4);
\draw (1) to node[above left, xshift=0pt]{$4$} (2); 
\draw (1) to node[above, xshift=0pt]{$4$} (3); 
\draw (1) to node[below, xshift=-4pt, yshift=+2pt]{$-1$} (4);
\draw (2) to node[above right, xshift=0pt]{$4$} (3); 
\draw (2) to node[right, xshift=-2pt, yshift=+2pt]{$-1$} (4); 
\draw (3) to node[right, xshift=2pt]{$9$} (4); 

\end{tikzpicture}}
\end{tabular}
\\
\begin{tabular}{@{}c@{}}
Second-Order \\ Difference Norm \\
\scalebox{0.8}{$\sum\limits_{i=1}^{n-1} \norm{\bpoint_{i+1} - 2 \bpoint_i + \bpoint_{i-1}}^2$}\\
\scalebox{0.8}{$\trace\plist{\tr{\pmat} \HmatDDN(n,2) \pmat}$}
\end{tabular} 
& 
\begin{tabular}{@{}c@{}}
\scalebox{0.45}{\begin{tikzpicture}[scale=1.3, thick, font=\large, roundnode/.style={circle, draw}]

\node[roundnode] (0) at (-0.8660, -0.5000) {$0$}; 
\node[roundnode] (1) at (0.0000, 1.0000) {$1$}; 
\node[roundnode] (2) at (0.8660, -0.5000) {$2$}; 
    
\draw (0) to node[midway, left, xshift=-3pt]{$2$} (1); 
\draw (1) to node[midway, right, xshift=3pt]{$2$} (2); 
\draw (0) to node[midway, below, xshift=0pt]{$-1$} (2); 

\end{tikzpicture}} 
\end{tabular}
& 
\begin{tabular}{@{}c@{}}
\scalebox{0.45}{\begin{tikzpicture}[scale=2.0, thick, font=\large, roundnode/.style={circle, draw}]

\node[roundnode] (0) at (0,0) {$0$}; 
\node[roundnode] (1) at (0,1) {$1$}; 
\node[roundnode] (2) at (1,1) {$2$}; 
\node[roundnode] (3) at (1,0) {$3$}; 

\draw (0) to node[midway, left, yshift=0pt]{$2$} (1); 
\draw (1) to node[midway, above, yshift=0pt]{$4$} (2); 
\draw (2) to node[midway, right, yshift=0pt]{$2$} (3); 
\draw (0) to node[midway, left, xshift=-3pt]{$-1$} (2); 
\draw (1) to node[midway, right, xshift=3pt]{$-1$} (3); 

\end{tikzpicture}} 
\end{tabular}
&
\begin{tabular}{@{}c@{}}
\scalebox{0.45}{\begin{tikzpicture}[scale=1.5, thick, font=\large, roundnode/.style={circle, draw}]

\node[roundnode] (0) at (-0.5878,-0.8090) {$0$}; 
\node[roundnode] (1) at (-0.9511,0.3090) {$1$}; 
\node[roundnode] (2) at (0.0000,1.0000) {$2$}; 
\node[roundnode] (3) at (0.9511,0.3090) {$3$}; 
\node[roundnode] (4) at (0.5878,-0.8090) {$4$}; 

\draw (0) to node[midway, left, xshift=-2pt]{$2$} (1); 
\draw (1) to node[midway, above left, xshift=0pt]{$4$} (2); 
\draw (2) to node[midway, above right, xshift=0pt]{$4$} (3); 
\draw (3) to node[midway, right, xshift=2pt]{$2$} (4); 
\draw (0) to node[below left, xshift=0pt]{$-1$} (2); 
\draw (1) to node[below, xshift=0pt]{$-1$} (3); 
\draw (2) to node[midway,below right, xshift=0pt]{$-1$} (4); 

\end{tikzpicture}}
\end{tabular}
\\
\begin{tabular}{@{}c@{}}
First-Order \\
Difference Variance \\
\scalebox{0.8}{$\sum\limits_{i=1}^{n} n \norm{\bpoint_{i} - \bpoint_{i-1}}^2 - \norm{\bpoint_n - \bpoint_0}^2$}\\
\scalebox{0.8}{$\trace\plist{\tr{\pmat} \HmatDDV(n,1) \pmat}$}
\end{tabular} 
& 
\begin{tabular}{@{}c@{}}
\scalebox{0.45}{\begin{tikzpicture}[scale=1.3, thick, font=\large, roundnode/.style={circle, draw}]

\node[roundnode] (0) at (-0.8660, -0.5000) {$0$}; 
\node[roundnode] (1) at (0.0000, 1.0000) {$1$}; 
\node[roundnode] (2) at (0.8660, -0.5000) {$2$}; 
    
\draw (0) to node[above left, xshift=0pt]{$2$} (1); 
\draw (0) to node[below, xshift=0pt]{$-1$} (2); 
\draw (1) to node[above right, xshift=0pt]{$2$} (2); 

\end{tikzpicture}} 
\end{tabular}
& 
\begin{tabular}{@{}c@{}}
\scalebox{0.45}{\begin{tikzpicture}[scale=2.0, thick, font=\large, roundnode/.style={circle, draw}]

\node[roundnode] (0) at (0,0) {$0$}; 
\node[roundnode] (1) at (0,1) {$1$}; 
\node[roundnode] (2) at (1,1) {$2$}; 
\node[roundnode] (3) at (1,0) {$3$}; 

\draw (0) to node[midway, left, yshift=0pt]{$3$} (1);
\draw (0) to node[midway, below, yshift=0pt]{$-1$} (3);
\draw (1) to node[midway, above, yshift=0pt]{$3$} (2);
\draw (2) to node[midway, right, yshift=0pt]{$3$} (3); 

\end{tikzpicture}} 
\end{tabular}
&
\begin{tabular}{@{}c@{}}
\scalebox{0.45}{\begin{tikzpicture}[scale=1.5, thick, font=\large, roundnode/.style={circle, draw}]

\node[roundnode] (0) at (-0.5878,-0.8090) {$0$}; 
\node[roundnode] (1) at (-0.9511,0.3090) {$1$}; 
\node[roundnode] (2) at (0.0000,1.0000) {$2$}; 
\node[roundnode] (3) at (0.9511,0.3090) {$3$}; 
\node[roundnode] (4) at (0.5878,-0.8090) {$4$}; 

\draw (0) to node[midway, left, xshift=-2pt]{$4$} (1); 
\draw (1) to node[midway, above left, xshift=0pt]{$4$} (2); 
\draw (2) to node[midway, above right, xshift=0pt]{$4$} (3); 
\draw (3) to node[midway, right, xshift=2pt]{$4$} (4); 
\draw (0) to node[midway, below, xshift=0pt]{$-1$} (4);

\end{tikzpicture}}
\end{tabular}
\\[2mm]
\hline
\hline
\end{tabular}
\vspace{-3mm}
\end{table}

\addtocounter{footnote}{1}
\footnotetext{\label{fn.InteractionGraphWeights}To avoid decimals and enhance readability, we normalize the minimum magnitude of nonzero interaction weights to one.}

\begin{figure*}
\centering
\begin{tabular}{@{}c@{\hspace{0.03\columnwidth}}c@{\hspace{0.005\columnwidth}}c@{\hspace{0.005\columnwidth}}c@{}}
\includegraphics[width = 0.53\textwidth]{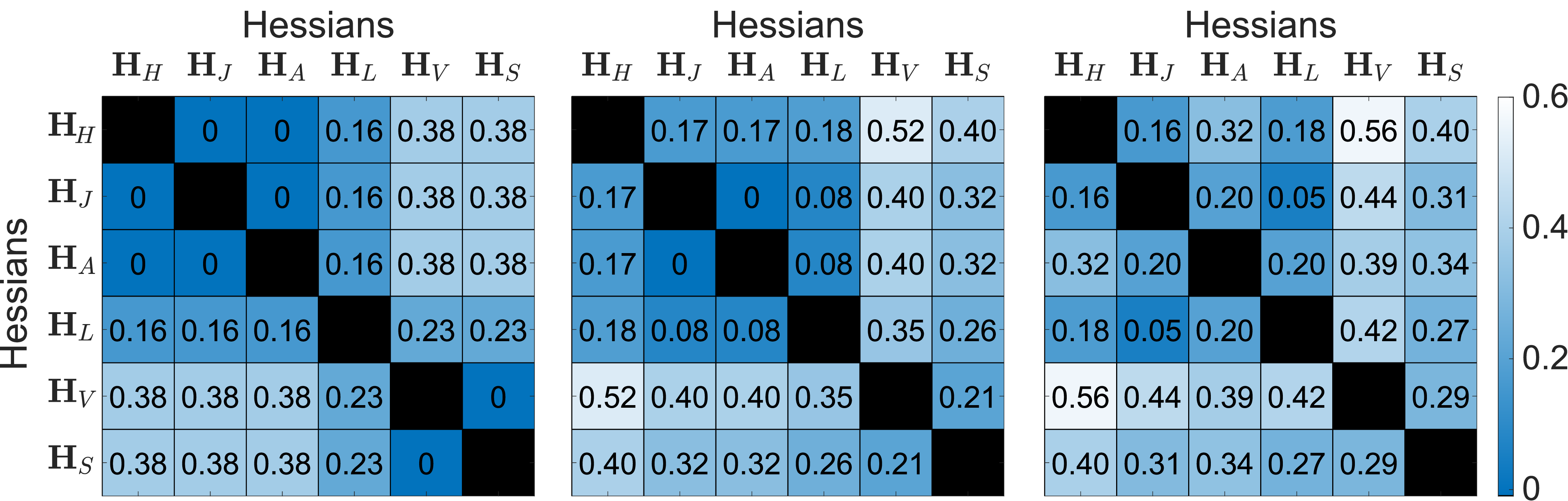}  &
\includegraphics[width = 0.145\textwidth]{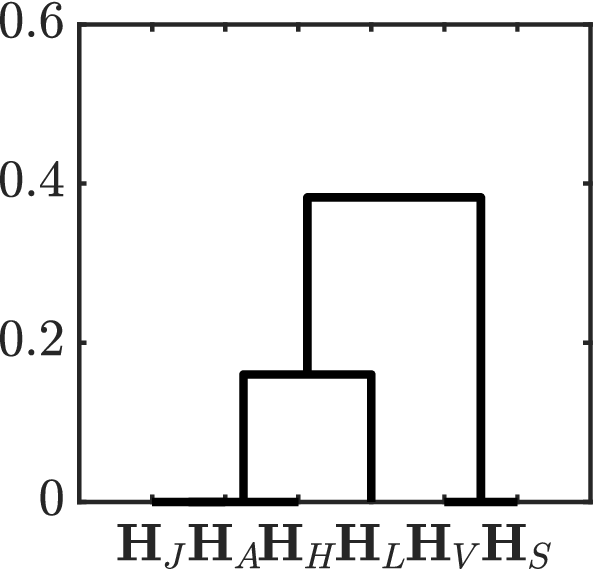}  &
\includegraphics[width = 0.145\textwidth]{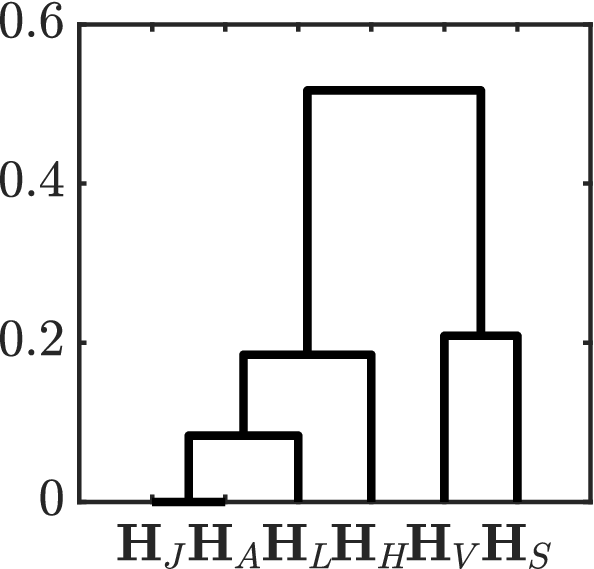}   & 
\includegraphics[width = 0.145\textwidth]{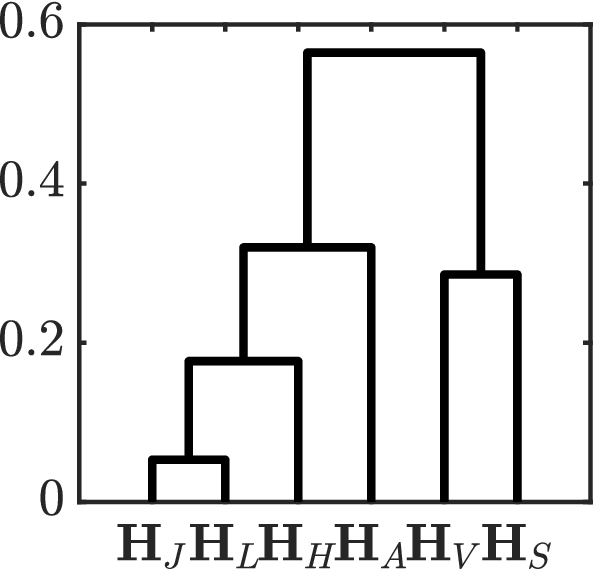}  
\\[-1mm]
\hspace{1mm} \footnotesize{(a)} \hspace{23mm} \footnotesize{(b)} \hspace{23mm} \footnotesize{(c)} &\footnotesize{(d)} & \hspace{2mm} \footnotesize{(e)} &  \hspace{2mm} \footnotesize{(f)}
\end{tabular}
\vspace{-1mm}
\caption{The pairwise distances of the normalized Hessians\reffn{fn.NormalizedHessianDistance} of physical, geometric, and statistical B\'ezier optimization objectives for (a) $n=2$, (b) $n=3$, and (c) $n=4$, and their respective complete-linkage clustering dendrograms in (d), (e), and (f). 
Here, $\Hmat_{V}(n):= \HmatDN(n,1)$ and $\Hmat_{A}(n):= \HmatDN(n,2)$ denote the physical B\'ezier velocity and acceleration norms; $\Hmat_{L}(n):= \HmatDDN(n,1)$ and $\Hmat_{H}(n):=\HmatDDN(n,2)$ denote the geometric B\'ezier length and homogeneity objectives defined as the first-order and the second-order finite-difference norms of the control points; and $\Hmat_{S}(n):=\HmatDDV(n,0)$ and $\Hmat_{J}(n):= \HmatDDV(n,1)$ denote the control-point variance and Jensen gap objectives defined as the zeroth-order and first-order finite-difference variances.}
\label{fig.hessian_similarity_score}
\vspace{-1mm}
\end{figure*}

\subsection{Statistical B\'ezier Optimization Objectives}

Minimizing $k^{\text{th}}$-order B\'ezier derivative (or difference) norm indirectly aims to minimize variation in the $(k-1)^{\text{th}}$-order B\'ezier derivative (or difference). Accordingly, inspired by this observation, we propose the differential variance of  B\'ezier curves as a statistical optimization objective.

\begin{definition}
(\emph{B\'ezier Differential Variance})
The differential variance of a B\'ezier curve can be measured using the variance of its $k^{\text{th}}$-order derivative  or the variance of the $k^{\text{th}}$-order finite difference of its control points as  
\begin{align}
\bvar\plist{\bcurve^{(k)}_{\pmat}} &= \tfrac{n!^2}{(n-k)!^2}\bvar\plist{\bcurve_{\Dmat(n,k)\pmat}} \\
&= \tfrac{n!^2}{(n-k)!^2} \trace\plist{\tr{\pmat} \HmatDV(n,k) \pmat} 
\\
\bvar\plist{\Dmat(n,k) \pmat} & = \tfrac{1}{n - k + 1} \trace\plist{\tr{\pmat} \HmatDDV(n,k) \pmat}
\end{align}
where the Hessians of the variances of B\'ezier derivatives and  B\'ezier control-point differences are, respectively,  given  in terms of the difference matrix $\Dmat(n,k)$ in \refeq{eq.HighOrderFiniteDifference} and the mean-shift matrix $\Smat(n)$ in \refeq{eq.MeanShiftMatrix} by
\begin{align}
\HmatDV(n,k) &:= \tr{\Dmat(n,k)\!} \Smat(n\!-\!k) \HmatN(n\!-\!k) \Smat(n\!-\!k) \Dmat(n,k) \nonumber 
\\
\HmatDDV(n,k)&:= \tr{\Dmat(n,k)\!} \Smat(n\!-\!k) \Dmat(n,k).
\end{align}
\end{definition}

\noindent In \reftab{tab.GraphTheoreticOptimizatinoObjectives}, we present the interaction graphs of B\'ezier control points for the variance of the zeroth-order and the first-order finite difference of B\'ezier control points using the Hessian of B\'ezier difference variance as the graph Laplacian. 
As seen in \reftab{tab.GraphTheoreticOptimizatinoObjectives}, both the variance and norm of the finite difference of B\'ezier control points offer a simpler, reduced interaction relation between the control points compared to B\'ezier derivative norm and variance. 
We also observe that the interaction graph of the second-order derivative norm and the first-order difference variance of B\'ezier curves are the same for the special cases of $n=2$ and $n=3$. 
Hence, we find it useful to present below the explicit form of the first-order difference variance of B\'ezier curves to gain a better intuitive understanding of this optimization objective.      

The first-order difference variance of B\'ezier control points measures Jensen gap of a B\'ezier curve that defines an upper bound on the gap between B\'ezier polygon length and the Euclidean distance between the first and last control point~as\reffn{fn.ControlPointVariance}
\begin{align*}
\bvar(\Dmat(n,1)\pmat) &= \tfrac{1}{n}\trace \plist{\tr{\pmat} \tr{\Dmat(n,1)} \Smat(n-1) \Dmat(n,1) \pmat} \\
& = \tfrac{1}{n} \sum_{i=1}^{n} \sum_{j=1}^{n} \tfrac{1}{2}\norm{\ppoint_{i} - \ppoint_{i-1} - \ppoint_{j} + \ppoint_{j-1}}^2 \\
&= \sum_{i=1}^{n} n \norm{\ppoint_{i} - \ppoint_{i-1}}^2 - \norm{\ppoint_{n} - \ppoint_{0}}^2 
\\
& \geq \plist{\sum_{i=1}^{n} \norm{\ppoint_i - \ppoint_{i-1}}}^{\!\!2} \! -  \norm{\ppoint_{n} - \ppoint_{0}}^2 \geq 0.
\end{align*} 

Finally, to demonstrate the similarity and hierarchical relation among different physical, geometric, and statistical B\'ezier optimization objectives, we present in \reffig{fig.hessian_similarity_score} the pairwise distance matrix of the normalized Hessians of B\'ezier differential norms and variances, as well as the associated clustering dendrograms, for $n= 2, 3, 4$. 
As expected, the second-order (respectively, first-order) differential norms and the first-order (respectively, zeroth-order) differential variances are strongly related to each other since minimizing the acceleration norm is similar to minimizing velocity variation.      

\addtocounter{footnote}{1}
\footnotetext{\label{fn.NormalizedHessianDistance}The pairwise normalized distance of two Hessian matrices $\Hmat_{A}$ and $\Hmat_{B}$ is calculated as $\frac{1}{2}\left\|\frac{\Hmat_{A}}{\norm{\Hmat_{A}}} - \frac{\Hmat_{B}}{\norm{\Hmat_{B}}} \right\|$ where $\norm{\Hmat}^2 = \trace\plist{\tr{\Hmat} \Hmat}$. }

\section{\!\!Safe Corridors for B\'ezier Curve Optimization\!}
\label{sec.SafeCorridors}

Safe corridors are of significant importance in  defining inequality constraints within B\'ezier curve optimization.
In this section, we present a method for constructing convex polygonal safe corridors around a reference path that heuristically balances its distance to obstacles. 

\subsection{Safe Corridor Construction}
\label{sec.SafeCorridorConstruction}  

For motion planning around obstacles, we consider a known closed set of obstacles, denoted by $\obstspace \subset \R^{\cdim}$, and the associated open obstacle-free space  $\freespace := \R^{\cdim} \setminus \obstspace$.
Modelling and understanding the connectivity and topology of free space can be challenging in motion planning \cite{arslan_PhdThesis2016}. 
Identifying a local convex free space, also referred to as a safe corridor, around a safe point in $\freespace$ allows for the exploitation of the local geometry of the free space in robot motion planning and control \cite{arslan_koditschek_IJRR2019, arslan_pacelli_koditschek_IROS2017}. 
Accordingly, we consider convex polygonal safe corridors that enable describing local safety requirements as linear inequality constraints in B\'ezier curve optimization.

\begin{definition}
\emph{(Safe Corridor)} A safe corridor $\safecor(\cpoint, \Xmat)$, parametrized by a center point $\cpoint \in \R^{\cdim}$ and a finite set of critical boundary points $\Xmat = \plist{\xpoint_1, \ldots, \xpoint_m} \in \R^{m \times d}$ is a convex polygonal local free space in $\freespace$ that is defined using the separating hyperplanes passing through the critical boundary points $\Xmat$ and facing towards the center $\cpoint$ as 

\noindent 
\begin{align*}
\safecor(\cpoint, \Xmat) &\!:=\! \clist{\vect{x} \in \R^{\cdim} \Big |  \tr{(\vect{x} \!-\! \xpoint_i)\!}(\cpoint\! -\! \xpoint_i) \geq 0 \quad \forall i = 1, \ldots, m} \nonumber
\\
& \hspace{-3mm}=  \clist{\vect{x} \! \in \R^{\cdim} \big | \tr{(\xpoint_i \!-\! \cpoint)\!} \vect{x} \leq \tr{(\xpoint_i \!-\! \cpoint)\!} \xpoint_i \quad \forall i = 1, \ldots, m} 
\end{align*}
such that the corridor interior, denoted by $\mathring{\safecor}(\cpoint, \Xmat)$, does not intersect the obstacle set, i.e., \mbox{$\mathring{\safecor}(\cpoint, \Xmat) \cap \obstspace = \varnothing$}.
\end{definition}
%
\noindent Hence, a safe corridor can be alternatively  expressed as
\begin{align}
\safecor(\cpoint, \Xmat) & = \clist{\vect{x} \in \R^{\cdim} \big |  \mat{A}_{\safecor(\cpoint, \Xmat)} \vect{x} \leq \vect{b}_{\safecor(\cpoint, \Xmat)}}
\end{align}
where the linear inequality parameters of the safe corridor are determined by the corridor center $\cpoint$ and the critical boundary points $\Xmat = \plist{\xpoint_1, \ldots, \xpoint_m}$ as
\begin{align}\label{eq.SafeCorridorConstraints}
\!\!\mat{A}_{\safecor(\cpoint, \Xmat)}\! :=\! \begin{bmatrix}
\tr{(\xpoint_1 \! -\! \cpoint)} 
\\
\vdots 
\\
\tr{(\xpoint_m \!-\! \cpoint)}
\end{bmatrix}, 
\,
\vect{b}_{\safecor(\cpoint, \Xmat)} \! := \!\begin{bmatrix}
\tr{(\xpoint_1 \!-\! \cpoint)} \xpoint_1
\\
\vdots
\\
\tr{(\xpoint_m \!-\! \cpoint)} \xpoint_n
\end{bmatrix}. \!\! \!
\end{align}

One can construct a maximal-volume local free space around a given corridor centroid $\cpoint$ by incrementally identifying and eliminating the closest obstacle points using separating hyperplanes, as illustrated in \reffig{fig.single_polygon}. 
Given any nonempty obstacle set $\obstspace$ and any desired collision-free corridor center $\cpoint \in \freespace$ with $\min_{\xpoint_{\obstspace} \in \obstspace} \norm{\cpoint - \xpoint_{\obstspace}} > 0$, the critical boundary points $\Xmat$ of the maximal-volume safe corridor that is centered at $\cpoint$ can be incrementally determined as follows:

\begin{enumerate}[i)]
\item (Initialize) Start with the empty boundary point set, i.e.,
\begin{align*}
\Xmat \leftarrow \varnothing
\end{align*}

\item \label{step.CollisionPointUpdate}(Update) Find the closest point of the obstacle set in the corridor interior to the corridor center, and then add it to the critical boundary point set,
\begin{align*}
\xpoint^*  &= \argmin_{\xpoint_{\obstspace} \in \obstspace \cap \mathring{\safecor}(\cpoint, \Xmat)} \norm{\xpoint_{\obstspace} - \cpoint} 
\\
\Xmat &\leftarrow \Xmat \cup \clist{\xpoint^*}
\end{align*}

\item (Terminate) If $\mathring{\safecor}(\cpoint, \Xmat) \cap \obstspace  \neq \varnothing$, then go to step \ref{step.CollisionPointUpdate}. Otherwise, terminate with the boundary point set $\Xmat$.
\end{enumerate}  

\noindent 
Note that this way of identifying safe corridor boundary points defines a well-defined functional relation for safe corridor construction, denoted by $f_{\safecor}(\ppoint, \obstspace)$. Moreover, the safe corridor construction function $f_{\safecor}(\ppoint, \obstspace)$  always returns critical corridor  boundary points that define nonredundant linear inequality constraints.
It is also pertinent to mention that if the obstacle set consists of a finite collection of discrete points or convex sets, then the critical boundary point set is finite because $\Xmat$ might contain a boundary point corresponding to each discrete obstacle point or convex obstacle set. 
For example, standard point cloud sensors provide information about discrete obstacle points whereas binary occupancy grid maps represent obstacles using square-shaped occupied grid cells. 
To efficiently compute a safe corridor using a finite collection of obstacle points, one can start sorting the obstacle points according to their distances to the corridor center so that the search for the relevant closest obstacle point in the update step can be performed by recursively eliminating obstacle points outside the corridor. 
To construct a safe corridor over a binary occupancy grid map, one can employ the grid structure to perform a circular spiral search, effectively determining all relevant critical boundary points without the need to explore the entire map.

\begin{figure}[t]
\centering
\begin{tabular}{@{}c@{\hspace{0.5mm}}c@{\hspace{0.5mm}}c@{}}
\includegraphics[width = 0.33\columnwidth]{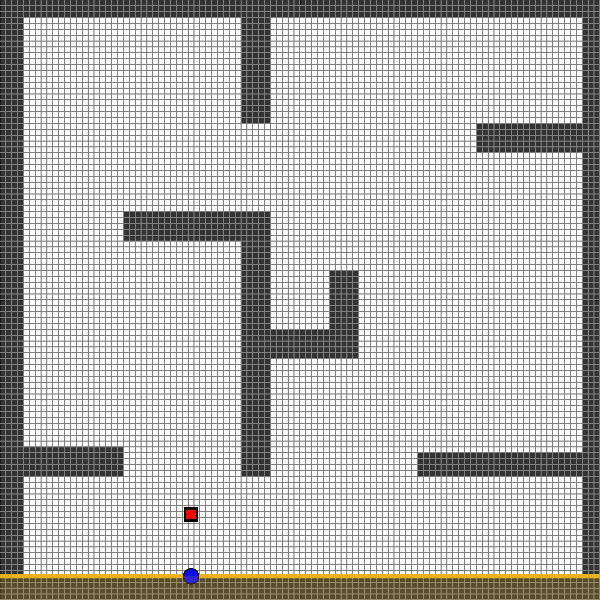}   
& 
\includegraphics[width = 0.33\columnwidth]{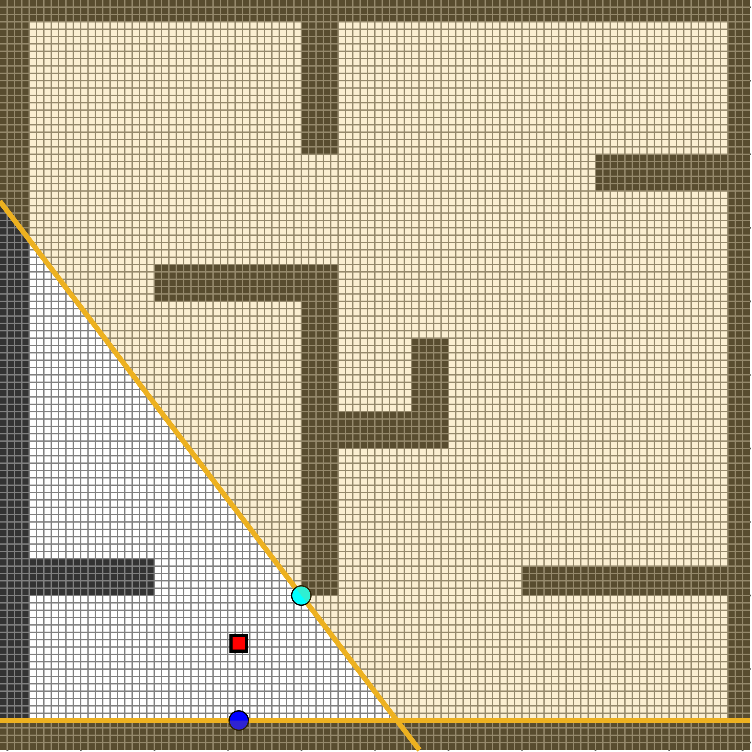}  
&
\includegraphics[width = 0.33\columnwidth]{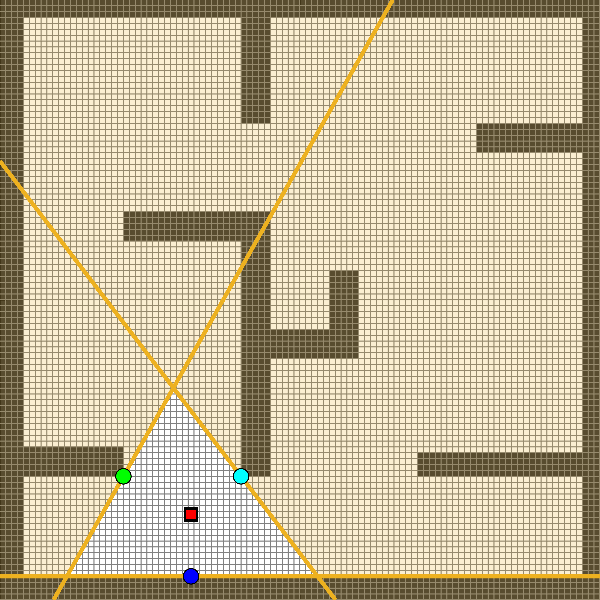}  
\end{tabular}
\vspace{-2mm}
\caption{Construction of a convex polygonal safe corridor around a given corridor center (red square) using separating hyperplanes from obstacles.}
\label{fig.single_polygon}
\vspace{-2mm}
\end{figure}

An important application of safe corridors is identifying the available local free space around a given reference path.
To minimize the number of safe corridors along the reference path, one can begin by placing a safe corridor at the start of the reference path and then incrementally search for the next reference path point to position the subsequent safe corridor center.
Here, the search constraint is that the path segment between the previous and next corridor centers must stay within the previous safe corridor
Accordingly, given a collision-free reference path $\refpath: [0,1] \rightarrow \freespace$ with some positive clearance from obstacles, i.e., \mbox{$\min_{\substack{t \in [0,1]\\ \xpoint_{\obstspace} \in \obstspace}}\norm{\refpath(t) - \xpoint_{\obstspace}} > 0$}, we construct a sequence of adjacent safe corridors with intersections as illustrated in \reffig{fig.complete_corridor} and described below: 
\begin{enumerate}[i)]
\item (Initialize) Construct a safe corridor at the start of the reference path: for $k \leftarrow 1$ 
\begin{align*}
t_k &\leftarrow 0 \\
\Xmat_k &\leftarrow f_{\safecor}(\refpath[t_k], \obstspace)
\end{align*}
\item \label{step.UpdateCorridorCenter} (Update) Find the reference path point that is furthest from the previous corridor center,  ensuring that the associated reference path segment remains within the previous safe corridor, and construct a safe corridor at this next reference path point: for $k  \leftarrow k + 1$
\begin{align*}
t_{k}& =   \argmax_{\substack{t \in [t_{k-1}, 1]\\ r([t_{i-1}, t]) \subseteq \safecor\plist{r(t_{i-1}), \Xmat_{i-1}}}} t\\
\Xmat_k &\leftarrow f_{\safecor}(\refpath[t_{k}], \obstspace)
\end{align*}   

\item (Terminate) If $t_k < 1$, then go to step \ref{step.UpdateCorridorCenter}. Otherwise, terminate with the safe corridor centers $\plist{\refpath(t_1), \ldots, \refpath(t_k)}$ and the critical boundary points $\plist{\Xmat_{1}, \ldots, \Xmat_{k}}$.

\end{enumerate} 
\noindent Note that the number and size of safe corridors constructed around a reference path depend primarily on the reference path's clearance from obstacles and its length.   
A shorter reference path with greater clearance results in fewer and larger safe corridors.

\begin{figure}[t]
\centering
\begin{tabular}{@{}c@{\hspace{0.5mm}}c@{\hspace{0.5mm}}c@{}}
\includegraphics[width = 0.33\columnwidth]{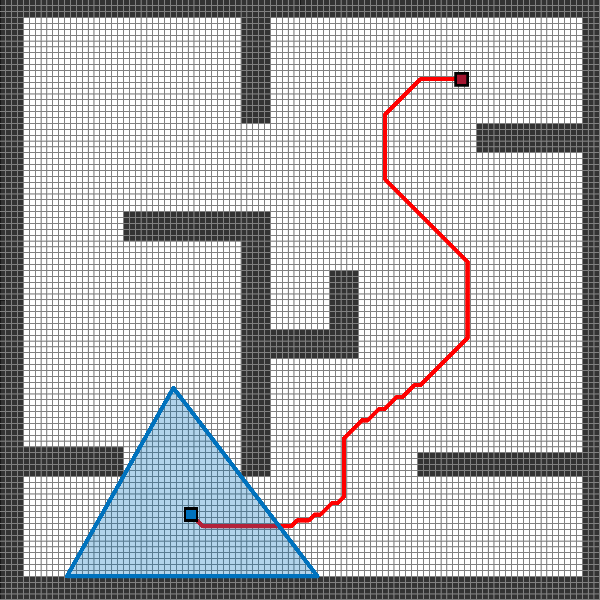}   
& 
\includegraphics[width = 0.33\columnwidth]{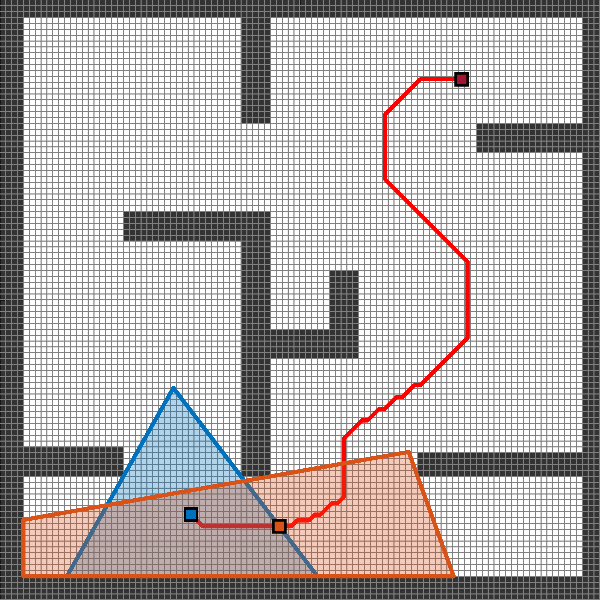}  
&
\includegraphics[width = 0.33\columnwidth]{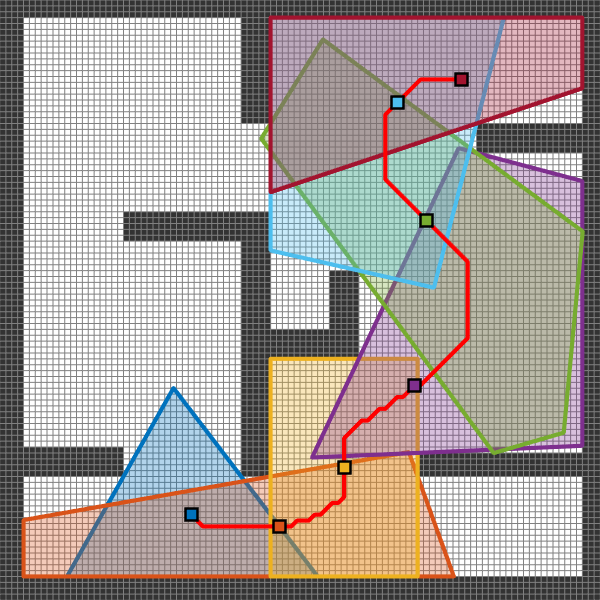} 
\end{tabular}
\vspace{-2mm}
\caption{Construction of safe corridors along a safe reference path (red line) from the start (blue square) to the goal (red square).}
\label{fig.complete_corridor}
\vspace{-2mm}
\end{figure}


\begin{figure}[b]
\vspace{-3mm}
\centering
\includegraphics[width = 0.43\columnwidth]{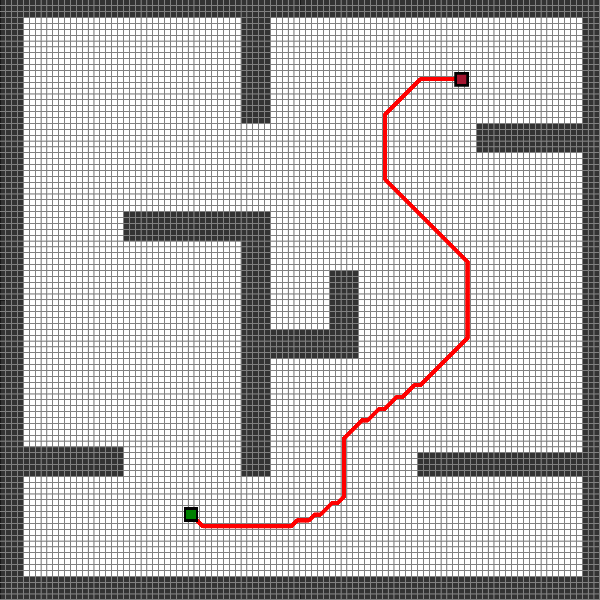}  
\includegraphics[width = 0.50\columnwidth]{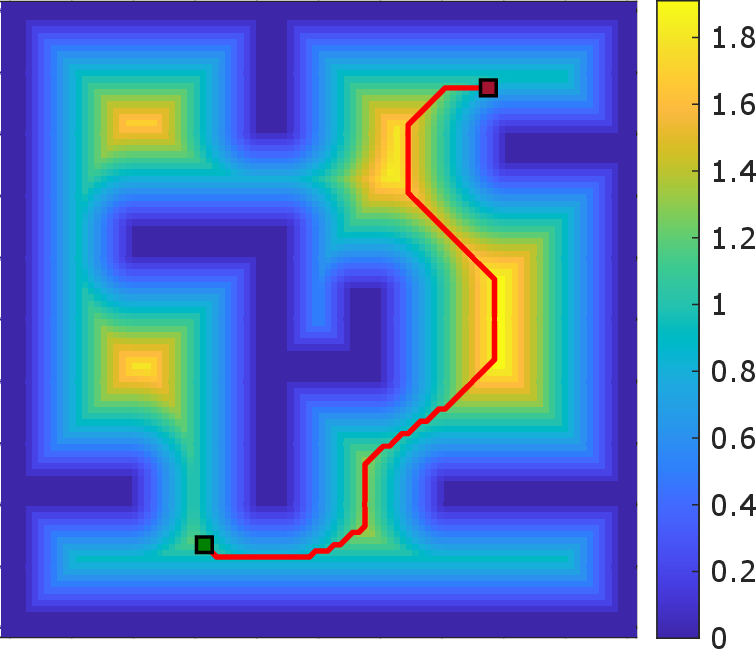}
\vspace{-2mm} 
\caption{A maximal-clearance minimal-length reference path over a binary grid map (left) is constructed using graph search with the inverse of the distance field (right) as a cost map to balance clearance from obstacles.}
\label{fig:ref_path}
\end{figure}

\subsection{Reference Path Planning for Safe Corridor Construction}
\label{sec.ReferencePathPlanning}

To minimize the number of safe corridors and maximize their volume, we adopt a heuristic reference path construction strategy that maximizes clearance from obstacles while simultaneously minimizing the length of the reference path.
In particular, to find a reference path with maximal clearance and minimal length over a binary occupancy grid map, we utilize the inverse distance field, representing the inverse of the distance to collision, as the visiting cost for each map grid cell, i.e., 
{\small
\begin{align*}
\mathrm{cost}(\mathrm{grid}) =  \frac{1}{\mathrm{dist2coll}(\mathrm{grid})}. 
\end{align*}
}%
We then determine the minimum-cost path joining a given pair of start and goal positions,  using a graph search (e.g., A* or Dijkstra) algorithm over a binary occupancy map (assuming 8-neighbor connectivity) where the edge transition cost between two adjacent grid cells is set as the maximum of their individual visiting costs, i.e.,
{\small
\begin{align*}
\mathrm{travel\_cost}(\mathrm{grid}_i, \mathrm{grid}_j)& =  \max\plist{\mathrm{cost}(\mathrm{grid}_i), \mathrm{cost}(\mathrm{grid}_j)}\\
& \hspace{-13mm} = \frac{1}{\min\plist{\mathrm{dist2coll}(\mathrm{grid}_i), \mathrm{dist2coll}(\mathrm{grid}_j)}} .  
\end{align*}
}%
In \reffig{fig:ref_path}, we illustrate an example of a maximal-clearance minimal-length reference path over a binary occupancy grid map that connects a given start and goal positions while ensuring a balanced distance from nearby obstacles.

\section{Numerical Simulations}
\label{sec.NumericalSimulations}
 
In this section, we present numerical examples to demonstrate graph-theoretic B\'ezier curve optimization over safe corridors on a binary occupancy map using different physical, geometric and statistical B\'ezier optimization objectives.
For a given start and goal positions, denoted by $\ppoint_{\mathrm{start}}$ and $\ppoint_{\mathrm{goal}}$, we first find a maximal-clearance minimal-length reference path joining $\ppoint_{\mathrm{start}}$ and $\ppoint_{\mathrm{goal}}$ as described in \refsec{sec.ReferencePathPlanning}. 
Then, we automatically deploy a collection of safe corridors, denoted by $\safecor_1, \ldots, \safecor_m$, along the reference path as described in \refsec{sec.SafeCorridorConstruction}.
   
We assume that each safe corridor is associated with a B\'ezier curve for which it defines safety constraints.
Hence, our optimization variables are the control points of a sequence of connected B\'ezier curves $\plist{\bcurve_{\pmat_1}, \ldots, \bcurve_{\pmat_m}}$  that are constrained over the associated safety corridors $\safecor_1, \ldots, \safecor_m$ and smoothly join the start position  $\ppoint_{\mathrm{start}}$ and the goal position $\ppoint_{\mathrm{goal}}$.
We assume that there is a certain desired level $C$ of continuity between adjacent B\'ezier curves. 
Accordingly, as in \reftab{tab.BezierCurveOptimization}, we consider an explicit matrix formulation of the generic graph-theoretic B\'ezier curve optimization over safe corridors as 
\begin{align*}
\begin{array}{rl}
\textrm{minimize} & \sum_{i=1}^{m} \trace \plist{\tr{\pmat}_{i} \mat{L}_i \pmat_i} \\[1mm]
\textrm{subject to} & \tr{\pmat_1} 
\scalebox{0.6}{$\begin{bmatrix} 1 \\ 0 \\ \vdots \\ 0\end{bmatrix}$} 
= \vect{p}_{\mathrm{start}}$, $\tr{\pmat_m} 
\scalebox{0.6}{$\begin{bmatrix}0 \\ \vdots\\ 0\\ 1 \end{bmatrix}$} 
= \vect{p}_{\mathrm{goal}} 
\vspace{1mm}
\\
& \hspace{-7mm} \tr{\pmat_i\!}\tr{\Dmat(n,c)\!} \scalebox{0.6}{$\begin{bmatrix}0 \\ \vdots\\ 0\\ 1 \end{bmatrix}$} 
 \!=\! \tr{\pmat_{i+1}\!\!}\tr{\Dmat(n,c)\!} \scalebox{0.6}{$\begin{bmatrix}1 \\ 0 \\ \vdots\\ 0 \end{bmatrix}$} 
 \quad \substack{\forall i = 1, \dots, m-1\\ \forall c = 0, \ldots, C \hspace{4mm}}
\vspace{1mm}
\\
& \mat{A}_{\safecor_i}\tr{\pmat_i} \leq \vect{b}_{\safecor_i} \tr{\mat{1}} \quad \scalebox{0.75}{$\forall i = 1, \dots, m-1$}
\end{array}
\end{align*}
where $\mat{L}_i$ is a positive semidefinite Laplacian matrix describing a desired interaction relation for the control points of B\'ezier curve $\bcurve_{\pmat_i}$, $\Dmat(n,k)$ is the finite-difference matrix in \refeq{eq.HighOrderFiniteDifference}, $\mat{A}_{\safecor_i}$ and $\vect{b}_{\safecor_i}$ are the linear inequality parameters of the safe corridor $\safecor_i$ defined as in \refeq{eq.SafeCorridorConstraints}.     

In \reffig{fig:optimized_paths_geo}, we consider various B\'ezier differential norms and variances as an optimization objective and illustrate the resulting optimal third-order B\'ezier curves (with $C^1$ endpoint continuity) over convex polygonal safe corridors constructed along a reference path that balances distance to obstacles while connecting the start and goal positions. 
Here, the major use of the reference path is to determine safe corridor constraints for B\'ezier curve optimization.
Therefore, a reference path acts as a topological description of a navigation corridor that is used to guide B\'ezier curve optimization to find a safe and smooth optimal polynomial path joining the start and goal position.     
As expected, the low-order differential B\'ezier optimization objectives yield shorter paths with higher curvature while higher-order differential  B\'ezier optimization objectives result in longer but smoother paths with lower curvature. 
The similarity of the velocity norm and the control-point variance is due to the strong similarity of their interaction graphs and Hessians/Laplacians in \reftab{tab.GraphTheoreticOptimizatinoObjectives} and \reffig{fig.hessian_similarity_score}, which also explains the relation between the acceleration norm and the velocity variance.
Hence, based on these interaction graphs and the special cases discussed in \refsec{sec.BezierOptimizationObjective}, one can conclude that B\'ezier difference norms and variances offer a more intuitive geometric interpretation compared to B\'ezier derivative norms and variances, despite providing comparable optimal curve profiles. 
Lastly, we observe that the conservatism of  B\'ezier curve convexity, when over-approximating safety constraints, becomes less severe, and even negligible, with increasing B\'ezier curve degree. 
We also see that having an extra high degree of B'ezier curve, higher than the continuity requirements of the optimization problem, does not change the resulting path quality which is due to the degree elevation property of B\'ezier curves \cite{arslan_tiemessen_TRO2022}.

\begin{figure*}[t]
\centering
\begin{tabular}{@{}c@{\hspace{0.005\columnwidth}}c@{\hspace{0.005\columnwidth}}c@{\hspace{0.005\columnwidth}}c@{\hspace{0.005\columnwidth}}c }
\includegraphics[width = 0.197\textwidth]{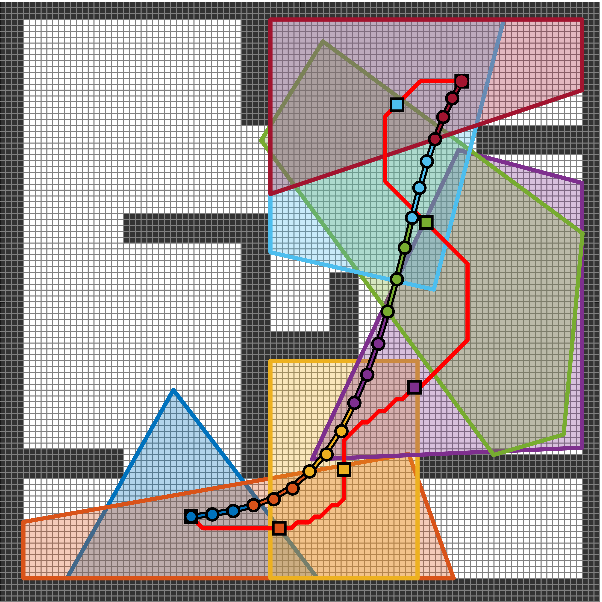}  &
\includegraphics[width = 0.197\textwidth]{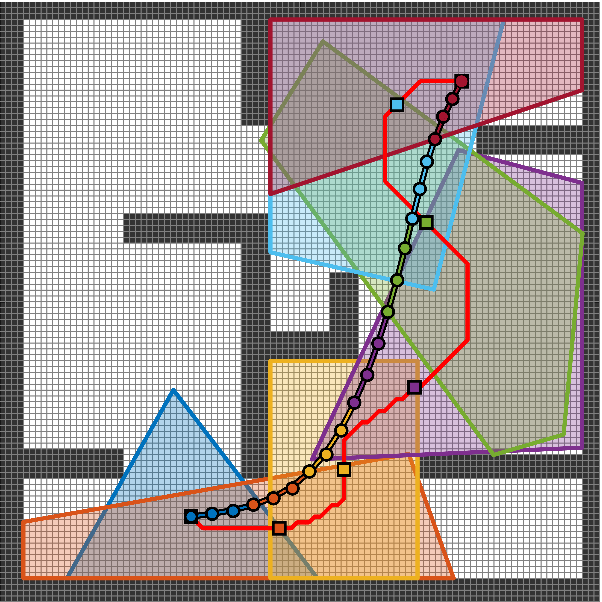}  &
\includegraphics[width = 0.197\textwidth]{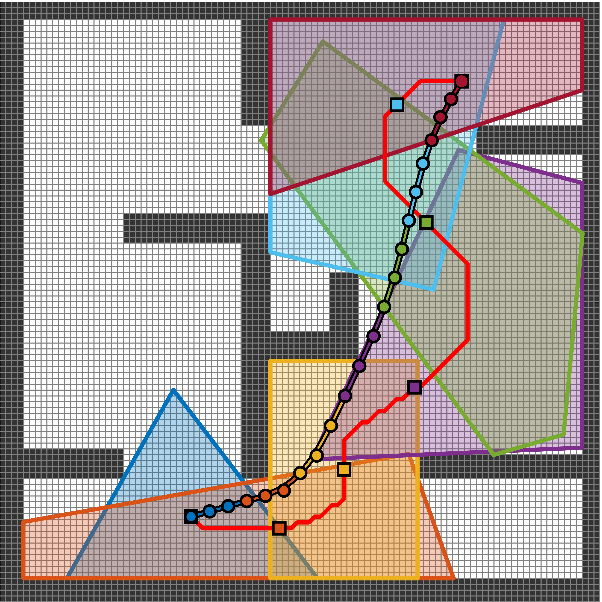} & 
\includegraphics[width = 0.197\textwidth]{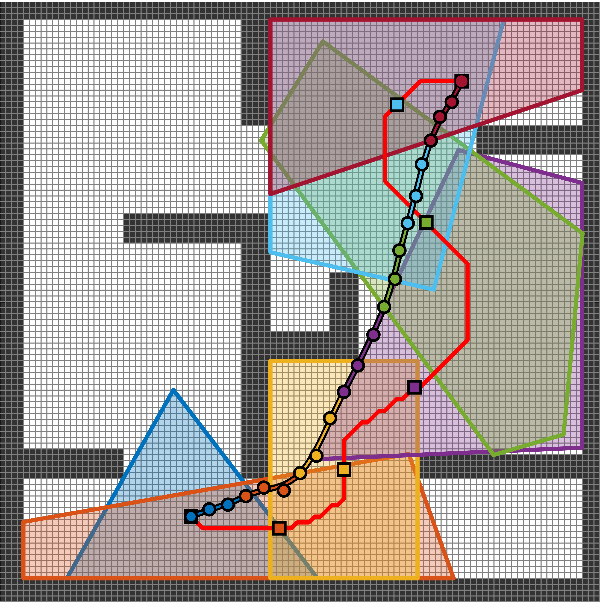} & 
\includegraphics[width = 0.197\textwidth]{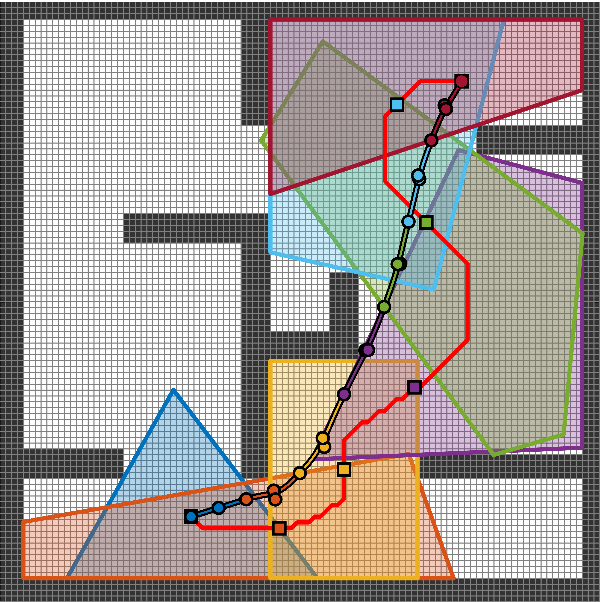}
\\
\includegraphics[width = 0.197\textwidth]{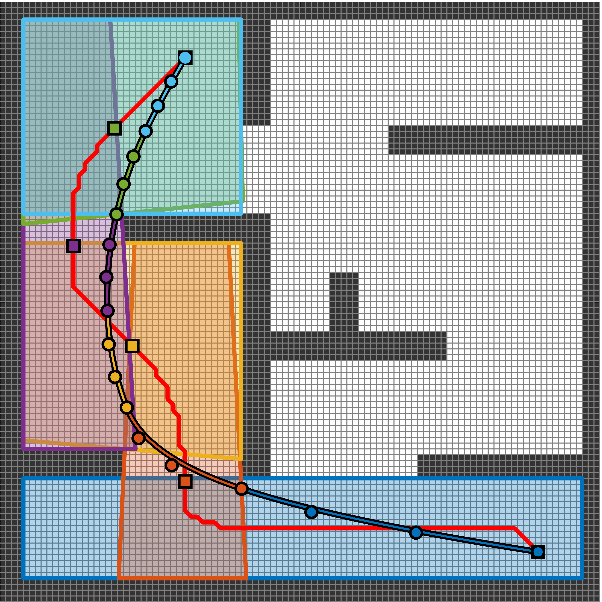}  &
\includegraphics[width = 0.197\textwidth]{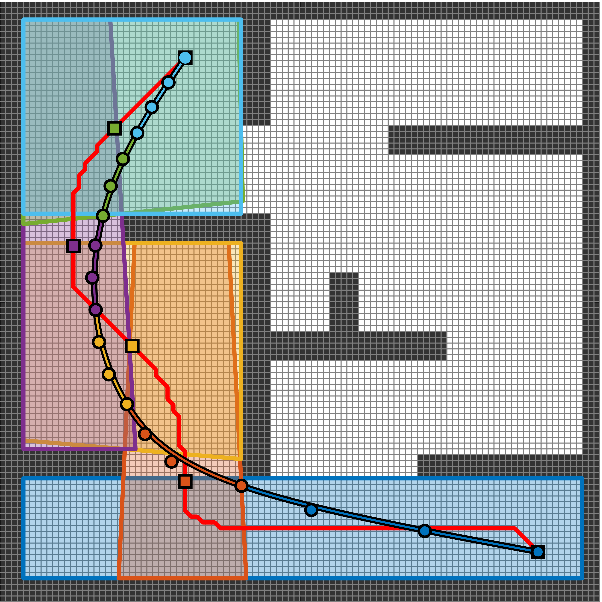}  &
\includegraphics[width = 0.197\textwidth]{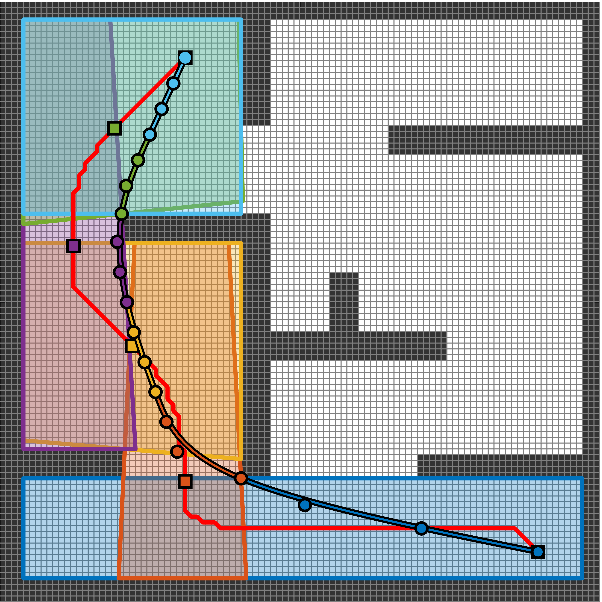} & 
\includegraphics[width = 0.197\textwidth]{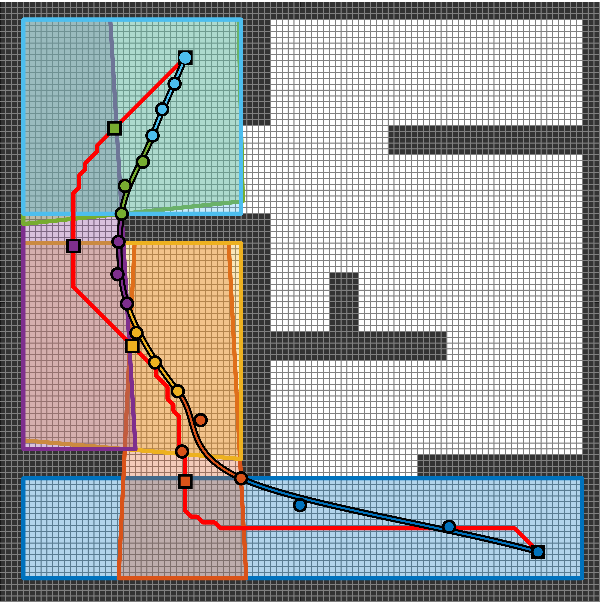} & 
\includegraphics[width = 0.197\textwidth]{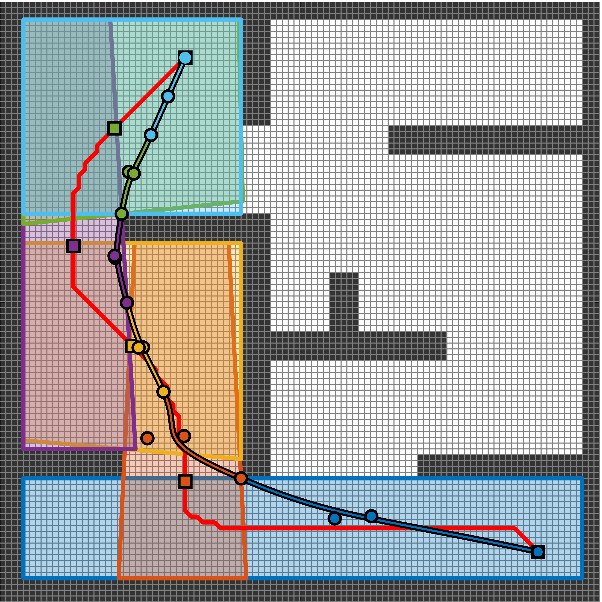}
\\
\includegraphics[width = 0.197\textwidth]{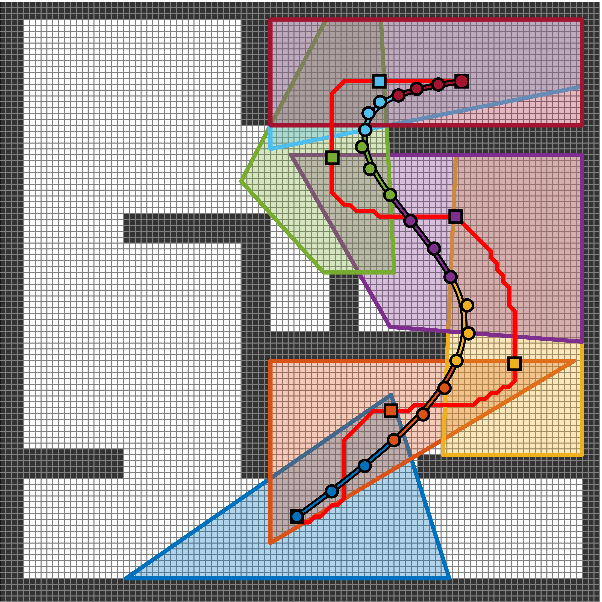}  &
\includegraphics[width = 0.197\textwidth]{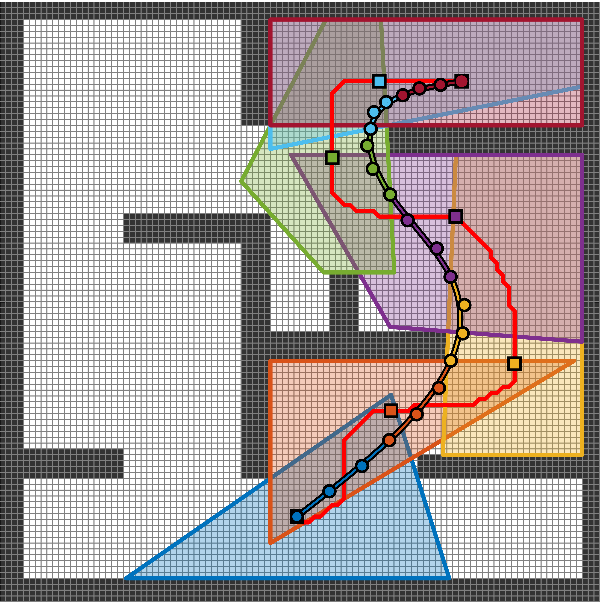}  &
\includegraphics[width = 0.197\textwidth]{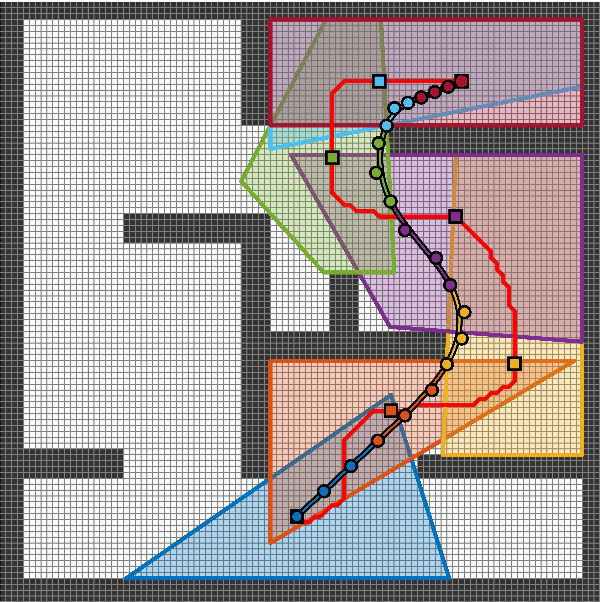} & 
\includegraphics[width = 0.197\textwidth]{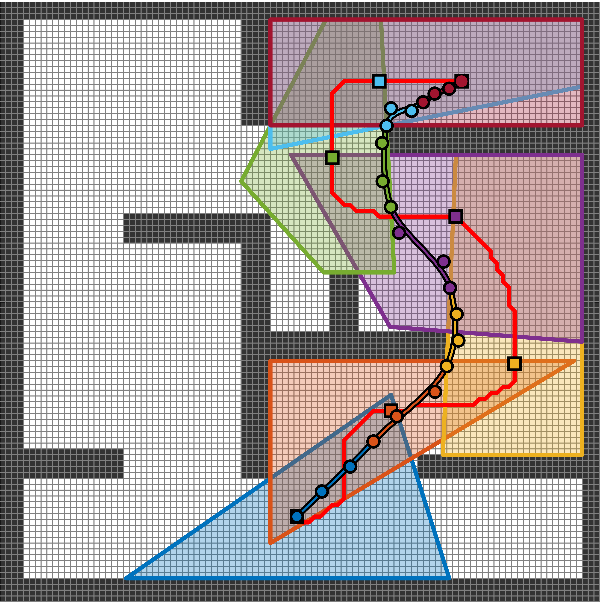} & 
\includegraphics[width = 0.197\textwidth]{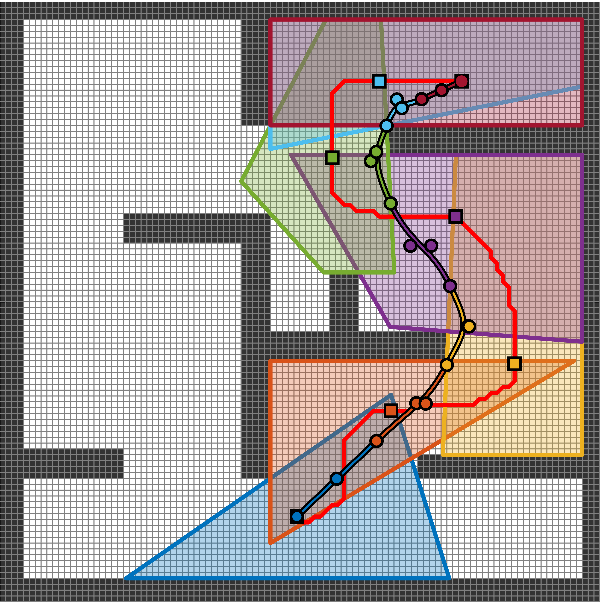}
\\[-1mm]
\footnotesize{(a)} & \footnotesize{(b)} & \footnotesize{(c)} & \footnotesize{(d)} & \footnotesize{(e)}
\end{tabular}
\vspace{-4mm}
\caption{Third-order B\'ezier curve optimization with  $C^{1}$ endpoint continuity over convex polygonal safe corridors constructed along a maximal-clearance minimal-length reference path (red): (a) the second-order difference norm, (b) the  second-order derivative norm and the first-order difference variance, (c) the first-order difference norm, (d) the first-order derivative norm, (e) the zeroth-order difference variance.}
\label{fig:optimized_paths_geo}
\end{figure*}

\section{Conclusions}
\label{sec.Conclusions}


In this paper, we introduce a new graph-theoretic approach for designing B\'ezier curve optimization objectives that measures the alignment of B\'ezier control points using a quadratic consensus distance derived from the Laplacian of an interaction graph of the control points. 
We present examples of physical, geometric and statistical consensus distances explicitly constructed using the norms and variances of B\'ezier curve derivatives and B\'ezier control-point differences. 
We demonstrate the use of these quadratic optimization objectives for finding safe and smooth paths over safe corridors.  
We present an explicit matrix formulation for such constrained quadratic optimization of B\'ezier curves.  
We observe that the norm and variance of B\'ezier continuous derivatives and finite discrete differences of the same order  generate similar path profiles.
Consequently, owing to the simpler coupling between control points, we conclude that the norm and variance of the finite differences of B\'ezier control points lead to more intuitive geometric optimization objectives compared to B\'ezier derivatives.
We believe that the unifying graph-theoretic interpretation of B\'ezier curve optimization opens up new research challenges, including the design of novel interaction graphs for B\'ezier control points to create new optimization objectives that go beyond B\'ezier differential norms and variances, for example, for approximately assessing (maximum) curve curvature for curvature-constrained smooth motion planning. 
Another promising research direction is to perform sensor-based B\'ezier curve optimization for local planning in dynamic environments~\cite{arslan_koditschek_IJRR2019}. 





\bibliographystyle{IEEEtran}
\bibliography{references}


\appendices 

\section{Proofs}
\label{app.Proofs}

\subsection{Proof of \reflem{lem.BezierInnerProduct}}
\label{app.BezierInnerProduct}

\begin{proof}
The result can be verified using the linearity and cyclic property of the trace operator as
\begin{align*}
\int_{0}^{1} \tr{\bcurve_{\pmat}(t)} \bcurve_{\qmat}(t) \diff t &= \int_{0}^{1} \tr{\bbasis_{n}(t)}\pmat  \tr{\qmat} \bbasis_{m}(t) \diff t
\\
&= \int_{0}^{1} \trace \plist{\tr{\qmat} \bbasis_{m}(t) \tr{\bbasis_{n}(t)} \pmat} \diff t 
\\ 
&= \int_{0}^{1} \trace \plist{\tr{\pmat} \bbasis_{n}(t) \tr{\bbasis_{m}(t)} \qmat} \diff t 
\\
&= \trace\plist{\tr{\pmat} \int_{0}^{1} \bbasis_{n}(t) \tr{\bbasis_{m}(t)} \diff t \qmat}
\\
& = \trace\plist{\tr{\pmat} \HmatB(n,m) \qmat}
\end{align*}
where each element of  $\HmatB(n,m) = \int_{0}^{1} \bbasis_{n}(t) \tr{\bbasis_{m}(t)} \diff t 
$ can be explicitly calculated  for $i = 0, \ldots n$ and $j=0, \ldots m$ as
\begin{align*}
\blist{\HmatB(n,m)}_{i+1, j+1} &= \int_{0}^1 \bbasis_{i,n}(t) \bbasis_{j,m}(t) \diff t 
\\
&= \frac{\binom{n}{i} \binom{m}{j}}{\binom{m+n}{i+j}}\int_{0}^{1} \bbasis_{i+j, m+n}(t) \diff t 
\\
& = \tfrac{1}{m+n+1} \frac{\binom{n}{i} \binom{m}{j}}{\binom{m+n}{i+j}}
\end{align*}
which is due to the product and integral properties of Bernstein polynomials, i.e., $\bbasis_{i,n}(t) \bbasis_{j,m}(t) = \frac{\binom{n}{i} \binom{m}{j}}{\binom{m+n}{i+j}} \bbasis_{i+j, m+n}(t)$ and $\int_{0}^{1} \bbasis_{i,n}(t) \diff t = \frac{1}{n+1}$ for all $i = 0, \ldots, n$ \cite{farin_CurvesSurfaces2002}.
\end{proof}

\subsection{Proof of \reflem{lem.BezierStatistics}}
\label{app.BezierStatistics}

\begin{proof}
The mean relation simply follows from the constant definite integral property of Bernstein polynomials, i.e., $\int_{0}^{1}\bpoly_{i,n}(t) \diff t  = \frac{1}{n+1}$ \cite{farouki_CAGD2012}, as 
\begin{align*}
\bmean(\bcurve_{\bpmat}) &= \int_{0}^{1} \bcurve_{\bpmat}(t) \diff t = \int_{0}^{1} \sum_{i=0}^{n} \bpoly_{i,n}(t) \bpoint_i \diff t 
\\
& = \sum_{i=0}^{n} \bpoint_i \underbrace{\int_{0}^{1} \bpoly_{i,n}(t)\diff t}_{\frac{1}{n+1}} = \tfrac{1}{n+1}\sum_{i=0}^{n+1} \bpoint_i = \bmean(\bpmat).
\end{align*}
The variances of a B\'ezier curve and its control points can be calculated using the fact that $\bcurve_{\bpmat}(t) - \bmean(\bcurve_{\bpmat}) = \bcurve_{\Smat(n) \bpmat} (t)$  and $\trace\plist{\tr{\pmat} \pmat} = \sum_{i=0}^{n} \norm{\ppoint_i}^2$ as
\begin{align*}
\bvar(\bcurve_{\pmat}) & = \int_{0}^{1} \norm{\bcurve_{\bpmat}(t) - \bmean(\bcurve_{\bpmat})}^2 \diff t = \int_{0}^{1} \norm{\bcurve_{\Smat(n) \bpmat}(t)}^2 \diff t 
\\ 
& = \trace \plist{\tr{\pmat} \Smat(n) \HmatN(n) \Smat(n) \pmat}
\\
\bvar(\pmat) & = \tfrac{1}{n+1}\sum_{i=0}^{n} \norm{\bpoint_i - \bpmean(\bcurve_{\bpmat})}^2 = \tfrac{1}{n+1}\trace\plist{\tr{\pmat} \Smat(n) \pmat}.
\end{align*}
Hence, $\bvar(\bcurve_{\bpmat}) \leq \frac{1}{n+1} \bvar(\bpmat)$ since we have\reffn{fn.BezierNormHessianBound}  $\mat{0} \preceq \HmatN(n) \preceq \frac{1}{n+1} \mat{I}$ and $\Smat(n) \Smat(n) = \Smat(n)$, which completes the proof.
\end{proof}

\section{Higher-Order Bernstein \& B\'ezier Derivatives}

Higher-order derivatives of Bernstein basis polynomials can be calculated explicitly \cite{farin_CurvesSurfaces2002, farouki_CAGD2012}.

\begin{lemma}
\emph{(Higher-Order Bernstein Derivatives)} The $k^{\text{th}}$-order derivative of the $i^{\text{th}}$ Bernstein basis polynomial $\bpoly_{i,n}(t) = \binom{n}{i} t^{i}(1-t)^{n-i}$ of degree $n$ is given by
\begin{align*}
\frac{\diff ^{k}}{\diff t ^k} \bpoly_{i, n}(t) = \frac{n!}{(n-k)!} \sum_{j=0}^{k} \binom{k}{j}(-1)^{k-j} \bpoly_{i-j, n-k}(t).
\end{align*}
\end{lemma}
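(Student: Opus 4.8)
The plan is to prove the identity by induction on the derivative order $k$, using the elementary first-order differentiation rule for Bernstein polynomials as the recursive engine. For the base case $k=0$ the right-hand side collapses to the single term $j=0$, namely $\tfrac{n!}{n!}\binom{0}{0}(-1)^{0}\bpoly_{i,n}(t)=\bpoly_{i,n}(t)$, so the claim holds trivially. The key preliminary fact I would establish first is the well-known first-derivative formula
\begin{align*}
\frac{\diff}{\diff t}\bpoly_{i,n}(t) = n\left(\bpoly_{i-1,n-1}(t) - \bpoly_{i,n-1}(t)\right),
\end{align*}
which follows by differentiating $\bpoly_{i,n}(t)=\binom{n}{i}t^{i}(1-t)^{n-i}$ with the product rule and recognizing $\binom{n}{i}i = n\binom{n-1}{i-1}$ and $\binom{n}{i}(n-i)=n\binom{n-1}{i}$. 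Throughout I would adopt the boundary convention $\bpoly_{j,m}(t)=0$ whenever $j<0$ or $j>m$, so that the shifted-index Bernstein polynomials appearing in the sums are unambiguously defined.

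For the inductive step, assuming the formula for some $k$, I would differentiate both sides once more and apply the first-derivative rule (now in degree $n-k$) to each summand, obtaining
\begin{align*}
\frac{\diff^{k+1}}{\diff t^{k+1}}\bpoly_{i,n}(t) = \frac{n!}{(n-k-1)!}\sum_{j=0}^{k}\binom{k}{j}(-1)^{k-j}\left(\bpoly_{i-j-1,n-k-1}(t)-\bpoly_{i-j,n-k-1}(t)\right),
\end{align*}
where the prefactor has become $\tfrac{n!}{(n-k)!}(n-k)=\tfrac{n!}{(n-k-1)!}$, exactly the factor required at order $k+1$. It then remains to show that the bracketed alternating sum equals $\sum_{j=0}^{k+1}\binom{k+1}{j}(-1)^{k+1-j}\bpoly_{i-j,n-k-1}(t)$.

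The main obstacle -- really the only nonroutine part -- is this combinatorial regrouping. I would shift the summation index by one in the $\bpoly_{i-j-1,n-k-1}$ terms (so $j\mapsto j+1$) while leaving the $\bpoly_{i-j,n-k-1}$ terms unchanged; collecting the coefficient of each $\bpoly_{i-j,n-k-1}(t)$ then produces $(-1)^{k+1-j}\bigl(\binom{k}{j-1}+\binom{k}{j}\bigr)$, which Pascal's rule $\binom{k}{j-1}+\binom{k}{j}=\binom{k+1}{j}$ collapses to $(-1)^{k+1-j}\binom{k+1}{j}$, completing the induction. Care is needed at the summation endpoints $j=0$ and $j=k+1$, where only one of the two contributions is present, but the boundary convention makes the extreme-index binomial coefficients vanish consistently. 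As an alternative non-inductive route, I could instead apply the general Leibniz rule directly to $t^{i}(1-t)^{n-i}$, match the term carrying $j$ derivatives on the factor $t^{i}$ to $\bpoly_{i-j,n-k}(t)$ by comparing powers of $t$, and verify the single binomial identity $\binom{n}{i}\tfrac{i!}{(i-j)!}\tfrac{(n-i)!}{(n-i-k+j)!}=\tfrac{n!}{(n-k)!}\binom{n-k}{i-j}$; this trades the Pascal-rule bookkeeping for a slightly heavier factorial computation.
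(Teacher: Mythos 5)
Your proof is correct, but your primary route differs from the paper's. The paper proves the identity in one pass by applying the general Leibniz rule $(fg)^{(k)} = \sum_{j=0}^{k}\binom{k}{j}f^{(j)}g^{(k-j)}$ directly to $t^{i}(1-t)^{n-i}$ and then massaging the resulting factorials into $\tfrac{n!}{(n-k)!}\binom{k}{j}(-1)^{k-j}\bpoly_{i-j,n-k}(t)$ --- precisely the ``alternative non-inductive route'' you sketch at the end, including the same binomial identity $\binom{n}{i}\tfrac{i!}{(i-j)!}\tfrac{(n-i)!}{(n-k-i+j)!} = \tfrac{n!}{(n-k)!}\binom{n-k}{i-j}$ that you would need to verify. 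Your induction on $k$ instead leans on the elementary first-derivative recursion $\tfrac{\diff}{\diff t}\bpoly_{i,n} = n(\bpoly_{i-1,n-1}-\bpoly_{i,n-1})$ and Pascal's rule; the inductive step is routine and your handling of the endpoint terms $j=0$ and $j=k+1$ via the convention $\bpoly_{j,m}=0$ for $j<0$ or $j>m$ (equivalently, vanishing out-of-range binomial coefficients) is exactly what is needed to make the regrouping airtight. What each approach buys: the Leibniz computation is shorter and gives the closed form in a single display, but it front-loads a somewhat opaque factorial cancellation; your induction trades that for transparent bookkeeping and makes visible \emph{why} the coefficients are signed binomials --- they accumulate via Pascal's triangle from repeated first-order differencing, which resonates nicely with the paper's recursion $\mat{D}(n,k)=\mat{D}(n-1,k-1)\mat{D}(n,1)$ for the finite-difference matrices. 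Either argument is acceptable; no gap.
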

\begin{proof}
The results can be verified using the general Leibniz rule, $(fg)^{(k)} = \sum_{j=0}^{k} \binom{k}{j} f^{(j)}g^{(k-j)}$, as 
\begin{align*}
\frac{\diff ^{k}}{\diff t^{k}} \bpoly_{i, n}(t) \hspace{-11mm} & \hspace{+11mm}=  \frac{\diff ^{k}}{\diff t^{k}} \tbinom{n}{i} t^{i} (1-t)^{n-i} \\
& = \tbinom{n}{i}\sum_{j=0}^{k} \tbinom{k}{j} \hspace{-2mm}\underbrace{\frac{\diff ^{j}}{\diff t^{j}} t^{i}}_{= \frac{i!}{(i-j)!}t^{i-j}} \hspace{-11mm}\overbrace{\frac{\diff ^{k-j}}{\diff t^{k-j}} (1-t)^{n-i}}^{= \frac{(n-i)!}{(n-k-i+j)!}(1\!-\!t)^{n-k-i+j} (-1)^{k-j}}
\\
& = \tbinom{n}{i}\sum_{j=0}^{k} \tbinom{k}{j} \tfrac{i!}{(i-j)!} t^{i-j} \tfrac{(n-i)!}{(n-k - i+j)!} (1-t)^{n-k-i+j} (-1)^{k - j}
\\
&= \sum_{j=0}^{k} \tbinom{n}{i} \tbinom{k}{j} \tfrac{i!}{(i-j)!} \tfrac{(n-i)!}{(n-k - i+j)!} \frac{1}{\binom{n-k}{i-j}} (-1)^{k - j}\bpoly_{i-j, n-k}(t)
\\
&= \sum_{j=0}^{k} \tbinom{n}{i} \tbinom{k}{j} \frac{n!}{\binom{n}{i}} \frac{\binom{n-k}{i-j}}{(n-k)!} \frac{1}{\binom{n-k}{i-j}} (-1)^{k - j}\bpoly_{i-j, n-k}(t)
\\
& = \tfrac{n!}{(n-k)!} \sum_{j=0}^{k} \tbinom{k}{j} (-1)^{k-j} \bpoly_{i-j, n-k}(t)
\end{align*}
where $\frac{\diff^j}{\diff t ^{j}} t^{i} = \frac{i!}{(i-j)!} t^{i-j}$ and $\frac{\diff^j}{\diff t ^{j}} (1\!-\! t)^{i} = \frac{i! (-1)^{j}}{(i-j)!} (1\!-\!t)^{i-j} $.
\end{proof}

Accordingly, higher-order derivatives of a B\'ezier curve can be analytically determined as
\begin{align*}
\frac{\diff^{k}}{\diff t^{k}} \bcurve_{\bpoint_0, \ldots, \bpoint_n}(t) &= \sum_{i=0}^{n} \bpoint_i \frac{\diff^{k}}{\diff t^{k}} \bpoly_{i,n}(t) 
\\
&= \tfrac{n!}{(n-k)!} \sum_{i=0}^{n} \bpoint_i \sum_{j = 0}^{k} \binom{k}{j}(-1)^{k-j} \bpoly_{i-j, n-k}(t) 
\\
& =  \tfrac{n!}{(n-k)!}\sum_{i=0}^{n-k} \underbrace{\sum_{j = 0}^{k} \binom{k}{j} (-1)^{k-j}\bpoint_{i+j}}_{\qpoint_i} \bpoly_{i, n-k}(t)
\\
& =  \tfrac{n!}{(n-k)!}\sum_{i=0}^{n-k} \qpoint_i \bpoly_{i, n-k}(t) 
\\
&= \tfrac{n!}{(n-k)!} \bcurve_{\qpoint_0, \ldots, \qpoint_{n-k}}(t)
\end{align*}
where $\qpoint_i = \sum_{j = 0}^{k} \binom{k}{j} (-1)^{k-j}\bpoint_{i+j}$ for $i = 0, \ldots, n-k$.

%

\end{document}